\documentclass[accepted]{uai2026}

\usepackage{url}
\usepackage{natbib}
\usepackage{amsmath,amssymb,amsthm}
\usepackage{graphicx}
\usepackage{mathtools}
\usepackage{booktabs}
\usepackage{algorithm}
\usepackage{algpseudocode}
\usepackage{placeins}
\usepackage{tikz}
\usetikzlibrary{arrows.meta,positioning,calc}
\usepackage{pgfplots}
\pgfplotsset{compat=1.18} 
\usepackage{comment}
\theoremstyle{plain}
\newtheorem{proposition}{Proposition}
\newtheorem{lemma}{Lemma}
\newtheorem{theorem}{Theorem}

\theoremstyle{definition}

\usepackage{color}
\AtBeginDocument{\hypersetup{hidelinks}}

\newcommand{\camerareadynote}[1]{
	\par\vspace{0.35em}\noindent\begingroup\color{red}\footnotesize  \par\endgroup\vspace{0.35em}}

\usepackage[normalem]{ulem}

\newtheorem{example}{Example}

\newcommand{\cS}{\mathcal{S}}
\newcommand{\cA}{\mathcal{A}}

\newcommand{\RR}{\mathbb{R}}
\newcommand{\NN}{\mathbb{N}}
\newcommand{\DM}{\Pi_{\mathrm{DM}}}
\newcommand{\CVaR}{\mathrm{CVaR}}
\title{Long-Term Sequential Decision Making Under Risk}

\author[1]{Irmaan~(Mohammad)~Mirzanejad}
\author[1]{Nadjet~Bourdache}
\author[1]{Abdel-illah~Mouaddib}
\affil[1]{Université de Caen Normandie, ENSICAEN, CNRS, Normandie Univ, GREYC UMR 6072, France}

\begin{document}
	\maketitle

	\begin{abstract}
		We study finite-horizon MDP planning under \emph{root-based} (resolute) risk objectives that apply a rank-dependent functional to the distribution of total returns. Such objectives are non-linear in the return distribution and generally break Bellman optimality, so direct optimization by scenario-tree enumeration is intractable. 
		We propose \textbf{ERQDP}, an enumeration-free and sampling-free method that solves a rank--quantile surrogate via exact DP (Dynamic Programming), evaluates candidate policies exactly by DP over return Probability Mass Functions (PMFs) on a discretized return grid (with an explicit rounding bound), and refines the surrogate in an anytime loop that reports an explicit upper--lower gap (certificate) for the target objective up to discretization budgets. 
		Across tested benchmarks, ERQDP returns certified solutions or explicit residual gaps, enables fast risk-parameter sweeps with substantial runtime gains, and supports both risk-averse and risk-seeking behaviors.
		
	\end{abstract}
  
	\section{Introduction}\label{sec:intro}

	\emph{Decision making under risk} refers
	to settings in which the possible outcomes of an action (and their probabilities)
	are known or modeled. Even in the one-shot case, the notion of an ``optimal''
	decision is subjective: two agents facing the same lottery can prefer different
	options depending on their attitude toward risk. For instance, a guaranteed gain
	of \(10\) may be preferred to a lottery paying \(50\) with probability \(0.2\) by
	a risk-averse agent, while a risk-seeking agent may prefer the lottery, and a
	risk-neutral agent would be indifferent (both have an expectation of \(10\)). This
	subjectivity motivates models that go beyond expectation by evaluating the
	\emph{entire} outcome distribution, including tail-oriented criteria such as
	Value-at-Risk/Conditional Value-at-Risk (VaR/CVaR) and more general rank-based
	(distortion) aggregators that reweight quantiles across a spectrum of outcomes
	\citep{Yaari1987,Schmeidler1989,Wang1996,BauerleJaskiewicz2024}.
	
	\emph{Sequential} decision making under risk is strictly more challenging. Actions
	are chosen repeatedly, each choice shapes future uncertainty, and the number of
	possible consequence paths (trajectories) as well as the number of possible
	strategies grows combinatorially with the horizon. Markov Decision Processes
	(MDPs) provide a standard formalism for such problems \citep{Puterman1994}. Under
	the expected-return criterion, dynamic programming (DP) applies because
	expectation is linear and satisfies the Bellman optimality principle. However, in
	domains such as finance, healthcare planning, and safety-critical operations, the
	distributional shape of the long-term return (including low-probability adverse
	outcomes) often matters at least as much as the mean, motivating risk-sensitive
	MDP solutions \citep{Kolobov2012,Ahmadi2021,BauerleJaskiewicz2024}. Most
	distributional criteria are \emph{non-linear} in the return distribution and
	therefore do not admit a straightforward Bellman decomposition.
	
	\begin{example} \label{ex:toy} Consider the following two-stage decision problem: an agent is at time $0$ at state $s_0$ and can either invest in protection (sure reward $-5$) or skip it (reward $0$). If invest, it arrives at $s_1$ and gets a certain reward of $-5$, otherwise it arrives at $s_2$ and gets a certain reward of $0$. In $s_1$ (respectively $s_2$), the agent can only select cautious (respectively aggressive) mode and gets an uncertain reward: $20$ with probability $0.99$ and $-110$ otherwise (respectively $40$ with probability $0.9$ and $-100$ otherwise). This MDP is graphically given in Figure~\ref{fig:toy_mdp} and the lotteries associated to both strategies are given in Table~\ref{tab:toy-mdp-returns}. We can see that the short-term strategy $\pi_{\text{short}}$ has a $10\%$ catastrophic outcome, whereas the long term one $\pi_{\text{long}}$ reduces catastrophe to $1\%$ which is somehow safer and could (depending on the preferences) compensate the lower rewards (compared to $\pi_{\text{short}}$).
	\end{example}
	
	\begin{table}[t]
		\centering
		\setlength{\tabcolsep}{3pt}
		\renewcommand{\arraystretch}{1.05}
		\resizebox{\columnwidth}{!}{%
			\begin{tabular}{@{}lccccc@{}}
				\toprule
				Plan & Outcomes (total return) & Prob. & $\mathbb{E}[\cdot]$ & $\CVaR_{0.1}[\cdot]$ \\
				\midrule
			    $\pi_{\text{long}}$ (invest $\to$ cautious) & $20,\ -110$ & $0.99,\ 0.01$ & $18.7$ & $7$ \\
                $\pi_{\text{short}}$ (skip $\to$ aggressive) & $40,\ -100$ & $0.9,\ 0.1$ & $26$ & $-100$ \\
				\bottomrule
		\end{tabular}}
		\caption{\small The lotteries of Example~\ref{ex:toy} and their evaluations.}
		\label{tab:toy-mdp-returns}
	\end{table}
	
	\begin{figure}[t]
		\centering
		\resizebox{\columnwidth}{!}{%
			\begin{tikzpicture}[
				>=Stealth,
				every node/.style={font=\normalsize},
				dec/.style={draw, rounded corners, inner sep=2pt, align=center},
				chance/.style={draw, circle, inner sep=1.5pt, align=center},
				term/.style={draw, rounded corners, inner sep=2pt, align=center, text width=16mm},
				]
				\node[dec]    (s0)   at (0,0)        {$s_0$\\decision};
				\node[dec]    (s1i)  at (3.0, 1.6)   {$s_1$\\decision};
				\node[dec]    (s1s)  at (3.0,-1.6)   {$s_2$\\decision};
				
				\node[chance] (ci)   at (5.8, 1.6)   {};
				\node[chance] (cs)   at (5.8,-1.6)   {};
				
				\node[term]   (iG)   at (9.0, 2.6)   {terminal\\$25$};
				\node[term]   (iB)   at (9.0, 0.6)   {terminal\\$-105$};
				\node[term]   (sG)   at (9.0,-0.6)   {terminal\\$40$};
				\node[term]   (sB)   at (9.0,-2.6)   {terminal\\$-100$};
				
				\draw[->] (s0) -- node[above, sloped] {$a_{\text{inv}}$, reward $-5$} (s1i);
				\draw[->] (s0) -- node[below, sloped] {$a_{\text{skip}}$, reward $0$} (s1s);
				
				\draw[->] (s1i) -- node[above] {cautious} (ci);
				\draw[->] (s1s) -- node[below] {aggressive} (cs);
				
				\draw[->] (ci) -- node[above, sloped] {$0.99$} (iG);
				\draw[->] (ci) -- node[below, sloped] {$0.01$} (iB);
				
				\draw[->] (cs) -- node[above, sloped] {$0.9$} (sG);
				\draw[->] (cs) -- node[below, sloped] {$0.1$} (sB);
		\end{tikzpicture}}
		\caption{\small The MDP of Example~\ref{ex:toy}.}
 		\label{fig:toy_mdp}
	\end{figure}
	
	This example illustrates why short-term screening and trajectory-level risk evaluation can lead to different policies. Furthermore, the determination of optimal strategies in both cases are quite different. While short-term screening only requires local optimal choices (with dynamic programming for example), at the root level, risk-sensitive planning selects a single policy at the initial state and evaluates it by applying a risk functional to the induced distribution of \emph{total} (long-term) returns. This couples exponentially many trajectories and generally breaks Bellman optimality. A naive optimization
		approach enumerates policies and scenario trees and then optimize over the enumerated solutions, but this quickly becomes
		intractable as the horizon grows. Existing approaches often restore tractability
		by changing the formulation (e.g., state augmentation or robust/dual
		reformulations), restricting the risk model to DP-friendly classes, or relying
		on approximations and sampling
		\citep{HowardMatheson1972,Chow2015,Lin2023b,BauerleJaskiewicz2024,AvilaPires2025}, but such reformulations or restrictions do not always apply.
	
	Moreover, while CVaR is widely used as a principled tail criterion (e.g., \citep{Rigter2022}), it is
	inherently pessimistic (it only summarizes the worst outcomes) and cannot
	express the full spectrum of behaviors that applications may require, including
	more nuanced tradeoffs across the entire distribution. This motivates flexible
	rank-based aggregators such as the Yaari model \citep{Yaari1987} also known as WOWA (Weighted Ordered Weighted Average) in the multicriteria decision making context \citep{Yager1988,Torra1997}.

	To study long-term risk in sequential decision-making problems, we focus in particular on non-linear models that can describe a wide range of preferences w.r.t risk, thereby requiring specific algorithmic solutions. In this purpose, we introduce  \textbf{ERQDP} (Exact Rank--Quantile Dynamic Programming),
	an enumeration-free framework for finite-horizon MDPs with resolute (root-level)
	distributional risk. It combines three ingredients:
	\begin{itemize}
		\item \textbf{Surrogate-to-exact risk planning:} solve a resolute risk objective via a distribution surrogate with resolution $K$, using DP to obtain a surrogate-optimal  policy.
		\item \textbf{Exact distributional evaluation:} compute the trajectory-return distribution of any policy by DP and evaluate CVaR/WOWA exactly on that distribution.
		\item \textbf{Anytime certified refinement:} increase $K$ and accept only root-improving updates, reporting an explicit optimality-gap/certificate signal.
	\end{itemize}
 
	\section{Related Work}\label{sec:related}
	
	A central distinction in risk-sensitive sequential decision making is between
	\emph{time-consistent} (nested) criteria and \emph{resolute} (root-based) criteria
	that apply a risk functional once to the total return distribution
	\citep{BauerleJaskiewicz2024}. The latter is often preferred when the goal is to
	encode a trajectory-level preference, but it is generally non-linear in the
	induced return distribution and therefore violates Bellman optimality. Root-based
	\textbf{VaR} (Value at Risk) selects a return threshold at a prescribed probability level and is
	not additive, so standard DP is inapplicable in general \citep{XiaPan2025}.
	Root-based \textbf{CVaR} (Conditional Value at Risk, a.k.a. Expected Shortfall) averages the worst tail mass and is
	coherent \citep{Artzner1999,RockafellarUryasev2000}, yet it remains time
	inconsistent under conditioning and typically regains tractability only through
	reformulation, e.g., via state augmentation or related constructions
	\citep{BauerleOtt2011,BauerleRieder2014,DingFeinberg2022} and robust/dual
	viewpoints \citep{Chow2015}. While widely used, CVaR is intrinsically tail-focused
	and can be overly pessimistic when preferences across the full distribution
	matter.
	
	A classical route is \emph{expected utility}: transform outcomes through a utility function and maximize expectation. 
	In sequential problems, applying utility \emph{locally} to one-step rewards preserves DP but changes semantics, whereas applying it \emph{globally} to the induced lottery over total returns matches resolute (root-level) preferences but typically breaks Bellman separability. 
	A canonical DP-friendly instance is \textbf{exponential/entropic utility}, which admits transformed Bellman recursions \citep{HowardMatheson1972,Jaquette1976,BauerleJaskiewicz2024}. 
	More broadly, \textbf{OCE} unifies many convex criteria (including entropic risk and CVaR) and supports both recursive (time-consistent) and ``risk outside recursion'' formulations \citep{BenTalTeboulle1986,BauerleJaskiewicz2024}; this has led to DP-style algorithms for entropic criteria such as ERM and EVaR \citep{Lin2023a,Su2025,Marthe2025}. 
	In model-free RL, coherent/convex risk criteria are often optimized via sampling-based methods \citep{Tamar2017,YuYing2023,CoacheJaimungal2024}, with complementary analyses for entropic objectives \citep{Zhang2024,MortensenTalebi2025} and dynamic time-consistent risk measures \citep{YuShen2022}; these are indispensable when the model is unknown, but they typically either adopt time-consistent semantics or provide sample-based approximations rather than certified root-level optimality for static (resolute) objectives. 
	Finally, although expected-utility models are relatively effective in terms of preference modelling, they do not, however, allow to express some (more or less) sophisticated and complex behaviours like, for example, the behaviour highlighted by the well-known Allais paradox \citep{Allais1953} (see \citep{GonzalesP2020} for more limitations of this model).

	When preferences must be controlled \emph{across} the return distribution (not
	only in an extreme tail), \textbf{distortion} and \textbf{spectral} criteria
	reweight quantiles through a probability-weighting (distortion) function
	\citep{Yaari1987,Schmeidler1989,Wang1996}. In decision theory, related
	rank-dependent utility models distort probabilities inside an expected-utility
	construction; in this paper we work with distortion \emph{risk measures} applied
	directly to return distributions, which share similar ingredients but are not the
	same object. The \textbf{WOWA} operator provides a flexible rank-based aggregation
	mechanism \citep{Yager1988,Torra1997} that can express behaviors that neither
	expected utility (too restrictive to target specific parts of the distribution)
	nor CVaR (tail-only) can capture. However, in sequential problems the dependence
	on the \emph{ordering} of trajectory returns makes optimization difficult: small
	policy changes can reshuffle ranks, breaking Bellman separability and
	complicating search over policies, 
	motivating heavy MILP/search approaches for even simpler problems: for example decision trees \citep{JeantetSpanjaard2011,JeantetPernySpanjaard2012}, or multiobjective MDPs where the reward for each objective is obtained by expectation \cite{OgryczakPernyWeng2013}.
	
	A complementary route is \textbf{distributional dynamic programming}, which
	propagates return distributions (or sufficient statistics) and then evaluates
	functionals of these distributions \citep{AvilaPires2025}. A key caution is that
	enforcing a DP decomposition for static/root objectives by discretizing ``risk
	levels'' can be inexact and yield policies that are suboptimal for the intended
	static objective \citep{Lin2023b}; related dependence on the initial state also
	appears for non-linear scalarizations in multiobjective MDPs
	\citep{OgryczakPernyWeng2013}. ERQDP avoids this pitfall by using DP only on an
	explicitly defined rank--quantile surrogate model, while evaluating the root objective \emph{exactly}
	for any explicit policy via Probability Mass Function (PMF) DP.
 
	\paragraph{Positioning of this paper.}
	The literature exposes a recurring tradeoff between \emph{tractability} and
	\emph{objective fidelity}. DP-friendly criteria preserve Bellman structure but
	either restrict the preference model or change semantics to time-consistent
	nested risk \citep{HowardMatheson1972,BenTalTeboulle1986,BauerleJaskiewicz2024}.
	Root-based quantiles/CVaR match trajectory-level evaluation but often regain
	tractability through augmentation or robust/dual reformulations
	\citep{Artzner1999,RockafellarUryasev2000,Chow2015,BauerleOtt2011,DingFeinberg2022,XiaPan2025}.
	Distortion and rank-based aggregators offer spectrum-wide control but their
	non-linearity breaks Bellman optimality and can lead to hard optimization problems
	or heavy search/MILP schemes \citep{JeantetSpanjaard2011,JeantetPernySpanjaard2012,OgryczakPernyWeng2013}. 
	
	To the best of our knowledge, ERQDP is the first DP-faithful planning framework for finite-horizon  MDPs with root-level risk consideration that satisfies, at the same time: (i) the consideration of a wide range of risk preferences (from tail-dominated to spectrum-wide); (ii) the scalability by avoiding trajectory enumeration and sampling approximations; and (iii) a theoretical guarantee of optimality.

	\section{Methodology}\label{sec:method}

	\subsection{Preliminaries: MDPs, trajectories, and rank-dependent measures}

	\paragraph{MDP, return, and resolute objective.}
	We consider a finite-horizon discounted MDP $\mathcal{M}=(\mathcal{S},\mathcal{A},P,r,\gamma,H)$ and use standard time-augmentation so a deterministic non-stationary policy is a mapping $\pi:\mathcal{S}\times\{0,\dots,H-1\}\to\mathcal{A}$.
	Given $s_0$, $\pi$ induces a trajectory $\tau=(s_0,a_0,\dots,s_H)$ with $a_t=\pi(s_t,t)$ and $s_{t+1}\sim P(\cdot\mid s_t,a_t)$, and discounted return
	\begin{equation}
		g(\tau)\;\triangleq\;\sum_{t=0}^{H-1}\gamma^t\,r(s_t,a_t,s_{t+1}).
		\label{eq:traj-return}
	\end{equation}
	Resolute planning commits to one policy at the root and maximizes a risk functional of the resulting return distribution:
	\begin{equation}
		\pi^\star\in\arg\max_{\pi\in\DM}\ \rho\!\left(G^\pi(s_0)\right).
		\label{eq:root-objective}
	\end{equation}
	\paragraph{Distortion functions and rank-dependent risk measures.}
	A broad class of law-invariant risk functionals is given by distortion (rank-dependent) measures \citep{Schmeidler1989,Yaari1987,Wang1996}.
	A distortion is a nondecreasing map $\phi:[0,1]\to[0,1]$ with $\phi(0)=0$ and $\phi(1)=1$, which reweights \emph{ranks} (cumulative probability levels) and thus encodes attitudes toward favorable vs.\ unfavorable outcomes \citep{Schmeidler1989,Yaari1987}.
	In our finite-horizon setting, returns are discrete: for $X$ with outcomes $x_1\ge\cdots\ge x_N$ (best to worst), probabilities $m_i$, and cumulative masses $C_i=\sum_{j=1}^i m_j$ ($C_0=0$), the distortion value is
	\begin{equation}
		\rho_\phi(X)=\sum_{i=1}^{N} x_i\big(\phi(C_i)-\phi(C_{i-1})\big).
		\label{eq:discrete-distortion}
	\end{equation}
	We instantiate \eqref{eq:discrete-distortion} with two criteria: CVaR (tail-only) and WOWA (spectrum-wide).
	
	\paragraph{Conditional Value at Risk (CVaR).}
	For $\alpha\in(0,1]$, $\CVaR_\alpha(X)$ is the average of the worst $\alpha$ mass of $X$:
	\begin{equation}
		\CVaR_{\alpha}(X)=\frac{1}{\alpha}\int_{1-\alpha}^{1} Q_X(p)\,dp,
		\label{eq:cvar}
	\end{equation}
	where under the best-to-worst convention
	\begin{equation}
		Q_X(p)\triangleq F_X^{-1}(1-p),\qquad p\in[0,1],
		\label{eq:btw-quantile}
	\end{equation}
	with $F_X^{-1}(p)\triangleq \inf\{x\in\RR:\ F_X(x)\ge p\}$ \citep{Dhaene2012}.
	CVaR admits classical coherent/spectral representations \citep{Artzner1999,RockafellarUryasev2000,AcerbiTasche2002,Acerbi2002} and is itself a distortion risk measure with distortion
	\begin{equation}
		\phi_\alpha(p)=
		\begin{cases}
			0, & p \le 1-\alpha,\\
			\frac{p-(1-\alpha)}{\alpha}, & p > 1-\alpha,
		\end{cases}
		\label{eq:cvar-distortion}
	\end{equation}
	so \eqref{eq:discrete-distortion} applies directly.
	
	\paragraph{Weighted Ordered Weighted Average (WOWA).}
	WOWA is a flexible rank-based aggregation operator \citep{Yager1988,Torra1997} that can be written in the distortion form \eqref{eq:discrete-distortion} for a suitable $\phi$.
	When $\phi$ is strictly increasing, $\rho_\phi$ is monotone w.r.t.\ first-order stochastic dominance; CVaR violates this because $\phi_\alpha$ is flat on $[0,1-\alpha]$ \citep{Yaari1987}.
	Different shapes recover different behaviors: $\phi(u)=u$ yields expectation, convex $\phi$ yields risk aversion, and concave $\phi$ yields risk seeking \citep{hong87,Yaari1987}.
	Accordingly, \emph{standard} (lower-tail) CVaR is inherently risk-averse, while risk seeking can be modeled via concave distortions (e.g., WOWA with $\beta<1$) or via optimistic CVaR variants such as OCVaR that optimize the upper tail \citep{AvilaPires2025}.
	\paragraph{Revisiting Example~\ref{ex:toy}}
	Table~\ref{tab:toy-risk} illustrates how tail-only and spectrum-wide criteria can induce different preferences.
	For WOWA we use the power distortion $\phi_\beta(p)=p^\beta$ and report two settings, $\beta\in\{1,2\}$ (with $\beta=1$ recovering expectation), alongside $\CVaR_\alpha$ for $\alpha\in\{0.10,0.05\}$.
	
\begin{table}[t]
	\centering
	\setlength{\tabcolsep}{3pt}
	\renewcommand{\arraystretch}{1.05}
	\begin{tabular}{@{}lcccc@{}}
		\toprule
		Plan & $\CVaR_{0.10}$ & $\CVaR_{0.05}$ & WOWA$_{\beta=1}$ & WOWA$_{\beta=2}$ \\
		\midrule
		$\pi_{\text{long}}$  & $7$    & $-6$   & $18.7$ & $17.4$ \\
		$\pi_{\text{short}}$ & $-100$ & $-100$ & $26.0$ & $13.4$ \\
		\bottomrule
	\end{tabular}
	\caption{Risk evaluations for Example~\ref{ex:toy} using total returns. WOWA uses $\phi_\beta(p)=p^\beta$ with the best-to-worst convention.}
	\label{tab:toy-risk}
\end{table}
	
	Finally, note that naively solving \eqref{eq:root-objective} by enumerating policies is infeasible due to the
	combinatorial policy space, and even evaluating a single policy by expanding its scenario tree is exponential
	in the horizon $H$. This motivates algorithms that optimize \eqref{eq:root-objective} efficiently. \citep{Strotz1955,Ruszczynski2010,BauerleOtt2011,Chow2015,BauerleJaskiewicz2024}.
	\subsection{ERQDP: Certified Rank--Quantile Dynamic Programming}

We now introduce \textbf{ERQDP}, which optimizes \eqref{eq:root-objective} through three coupled steps:
	
	(A) \emph{Surrogate-global planning:} replace the full ranked return distribution by a $K$-slice summary on the
	rank (cumulative probability) axis, and solve this induced surrogate problem exactly by backward induction (\textbf{RQDP}).
	
	(B) \emph{Exact root evaluation (no trajectory enumeration):} for any explicit policy, compute the return PMF by
	dynamic programming on a discretized return grid, and evaluate the true root objective (up to a controlled rounding error).
	
	(C) \emph{Anytime refinement with root-safe updates:} increase $K$ to refine the surrogate resolution, but accept a new
	policy only if its \emph{exactly evaluated} root objective improves, ensuring monotone improvement under resolute semantics.
 	
	\paragraph{(A) Rank--Quantile surrogate and Dynamic Programming (RQDP).}
	
	We define a surrogate approximation of \eqref{eq:discrete-distortion} to make the optimization feasible.
	The main idea is that, instead of enumerating and sorting the full return distribution of a policy,
	we approximate it by grouping outcomes according to their \emph{rank} i.e., their position along the cumulative probability axis into $K$ slices.
	
	\textbf{Rank axis, grid, and slices}
	Recall that rank-dependent criteria weight outcomes according to their position in the sorted list.
	Under the best-to-worst convention \eqref{eq:btw-quantile}, the variable $p\in[0,1]$ plays the role of a
	\emph{rank (cumulative probability) level}: $p=0$ indexes the best end of the distribution (top quantiles) and increasing $p$ moves toward worse quantiles, with $p=1$ indexing the worst end.
	It is not a probability of a single outcome; it is the cumulative-mass parameter used to traverse the sorted return distribution.

	For a fixed resolution $K$, we discretize this rank axis into a grid
	\[
	0=p_0<p_1<\cdots<p_K=1,
	\]
	
	typically the uniform grid $p_k=k/K$, in which case $q_k = 1/K$ for all $k$.
	Each interval $(p_{k-1},p_k]$ is a \emph{slice} with probability mass (quota)
	$q_k\triangleq p_k-p_{k-1}$, so $\sum_{k=1}^K q_k=1$.
	The integer $K$ controls the resolution: larger $K$ yields finer rank slices.
	
	\textbf{Piecewise-linear distortion on the grid.}
	Let $\phi_K$ be the piecewise-linear interpolant of $\phi$ on this grid and define the slice weights
	\[
	\Delta\phi_k \triangleq \phi_K(p_k)-\phi_K(p_{k-1}), \qquad k=1,\dots,K.
	\]
	
	Then the rank-dependent objective of \eqref{eq:discrete-distortion} reduces to the weighted sum 
	\begin{equation}
		\rho_{\phi_K}(X)=\sum_{k=1}^K \Delta\phi_k\,\bar Q_X(k)
		=\langle \Delta\phi^{(K)},\bar Q_X\rangle,
		\label{eq:phiK-slice}
	\end{equation}

	With $\Delta\phi^{(K)}\triangleq(\Delta\phi_1,\dots,\Delta\phi_K)\in\RR^K$, we summarize a return lottery $X$
	by a $K$-vector $\bar Q_X\in\RR^K$ that averages the best-to-worst quantile function $Q_X$ over $K$ rank slices.
	Let $0=p_0<p_1<\cdots<p_K=1$ be a rank grid and $q_k\triangleq p_k-p_{k-1}$ the mass of slice $k$.
	For each $k\in\{1,\dots,K\}$ we define the \emph{slice-average quantile}
	\begin{equation}
		\bar Q_X(k)\triangleq \frac{1}{q_k}\int_{p_{k-1}}^{p_k} Q_X(p)\,dp .
		\label{eq:slice-avg-quantile}
	\end{equation}
	In words, $\bar Q_X(k)$ is the average return level among outcomes whose \emph{rank} (cumulative probability position
	in the best-to-worst ordering) lies in the interval $(p_{k-1},p_k]$.
	
	Since $X$ is discrete in our setting, $Q_X$ is a step function. Writing the distinct outcomes of $X$ as
	$x_1\ge \cdots \ge x_N$ with probabilities $m_i$ and cumulative masses $C_i=\sum_{j=1}^i m_j$ (with $C_0=0$),
	the integral reduces to the finite overlap sum
	\begin{equation}
		\begin{aligned}
			\bar Q_X(k)
			&=\frac{1}{q_k}\sum_{i=1}^{N} x_i\,
			\lambda\!\big((C_{i-1},C_i]\cap(p_{k-1},p_k]\big),\\
			&\hspace{6em} k=1,\dots,K .
		\end{aligned}
		\label{eq:slice-avg-quantile-discrete}
	\end{equation}
	where $\lambda(\cdot)$ denotes interval length on $[0,1]$ (i.e., how much probability mass of outcome $x_i$
	falls inside slice $k$). This is exactly what the quota-filling step in our RQDP backup computes.
	Therefore, optimizing $\rho_{\phi_K}$ can be done over the $K$-dimensional summary $\bar Q_X$ rather than the full
	(return) distribution. An explicit $K{=}2$ quota-filling computation is given in Appendix~\ref{app:toy_slice_example}.

	\paragraph{Surrogate planning problem}
	In \eqref{eq:root-objective} we therefore replace $\rho$ by the surrogate formulation \eqref{eq:phiK-slice}:
	\begin{equation}
		\pi^\star_K \in \arg\max_{\pi\in\DM}\ \rho_{\phi_K}\!\left(G^\pi(s_0)\right).
		\label{eq:root-objective-surrogate}
	\end{equation}
	
	Given the slice-average vector $\bar Q_X\in\RR^K$, the surrogate evaluation $\rho_{\phi_K}(X)=\langle \Delta\phi^{(K)},\bar Q_X\rangle$ is a \emph{linear functional of this $K$-vector}.
	Dynamic programming applies because we define an induced $K$-slice \emph{surrogate return model} whose “state” is exactly this vector summary and whose backup computes the correct parent summary from successor summaries.

	In Algorithm 1, we associate to each state $s$ (and remaining horizon $h$) a $K$-vector $V_h(s)\in\RR^K$, interpreted as the slice-average summary \eqref{eq:slice-avg-quantile} of the
	$h$-step discounted return distribution from $s$ under the surrogate semantics.

	\paragraph{Backup} For each state $s$, and given successor summaries $V_{h-1}(s')\in\RR^K$, RQDP computes for each action $a$ a candidate parent vector $\bar Q_h(s,a)$ by:
	\begin{enumerate}
		\item \textbf{atomize} successor slices into a list of value--mass pairs. \emph{Atomization} means converting the successor $K$-slice summaries into a finite list of value--mass pairs (“atoms”) whose mixture defines the one-step-ahead surrogate return distribution. Let $t \triangleq H-h$ denote the stage index from the root (so the one-step reward is discounted by $\gamma^t$). Then $\{(x_{s',k},m_{s',k})\}$ that defines a discrete distribution over returns, where 
		\begin{align}
			x_{s',k}&=\gamma^{t}\,r(s,a,s')+V_{h-1}(s')(k), \label{eq:atoms_x}
			\qquad \\
			m_{s',k}&=\Pr(s'\mid s,a)\,q_k.
			\label{eq:atoms_m}
		\end{align}
		Each atom $(x_{s',k},m_{s',k})$ represents the entire rank slice $k$ of the successor distribution collapsed to its slice-average value. Importantly, parent slices are \emph{not} computed “diagonally” in $k$: after all atoms are pooled across all successors and all $k$, we re-sort them globally and quota-fill the parent slices.
		Thus, $\bar Q_h(s,a)(k)$ can depend on many child indices $k'$ through this re-ranking step.
		\item \textbf{sort and quota-fill.}
		Sort all atoms by value $x_{s',k}$ from best to worst.
		Initialize remaining slice capacities $\mathrm{cap}_k\gets q_k$ for $k=1,\dots,K$ and initialize slice accumulators
		$\mathrm{num}_k\gets 0$.
		Scan the sorted atoms in order; for the current atom with remaining mass $m$ and value $x$, repeatedly:
		allocate $\delta\gets \min\{m,\mathrm{cap}_k\}$ to the current slice $k$,
		update $\mathrm{num}_k\gets \mathrm{num}_k+\delta\,x$, $\mathrm{cap}_k\gets \mathrm{cap}_k-\delta$, and $m\gets m-\delta$;
		when $\mathrm{cap}_k=0$, increment $k$.
		At the end, define $\bar Q_h(s,a)(k)\triangleq \mathrm{num}_k/q_k$.
	\end{enumerate}

	\begin{algorithm}[t]
		\caption{RQDP: Rank--Quantile DP (fixed $K$)}
		\label{alg:rqdp}
		\begin{algorithmic}[1]
			\Require distortion $\phi$ (evaluated on grid points), horizon $H$, rank resolution $K$
			\State Define rank grid $0=p_0<\cdots<p_K=1$; quotas $q_k\gets p_k-p_{k-1}$
			\State Define slice weights $\Delta\phi_k \gets \phi(p_k)-\phi(p_{k-1})$ for $k=1,\dots,K$
			\ForAll{$s\in\cS$} \State $V_0(s)\gets \mathbf{0}\in\RR^K$ \EndFor
			
			\For{$h=1$ to $H$} \Comment{$h$ = remaining steps}
			\State $t\gets H-h$ \Comment{stage index from the root (for discounting)}
			\ForAll{$s\in\cS$}
			\If{terminal$(s)$} \State $V_h(s)\gets \mathbf{0}$; \textbf{continue} \EndIf
			\ForAll{$a\in\cA(s)$}
			\State $\bar Q_h(s,a)\gets \textsc{Backup}(s,a,t,V_{h-1}(\cdot),q)$
			\EndFor
			\State $\pi_h(s)\in\arg\max_{a\in\cA(s)}\ \langle \Delta\phi,\bar Q_h(s,a)\rangle$
			\State $V_h(s)\gets \bar Q_h(s,\pi_h(s))$
			\EndFor
			\EndFor
			\State \Return $\pi=\{\pi_h\}_{h=1}^H$ (and optionally $V_H(\cdot)$)
		\end{algorithmic}
	\end{algorithm}

	The presented algorithm is optimal for the surrogate model:
 	We justify that Algorithm~\ref{alg:rqdp} solves the induced $K$-slice surrogate problem
	\eqref{eq:root-objective-surrogate}. The only technical point is that the Backup step
	(atomize, sort, quota-fill) computes the slice-average vector $\bar Q$ of the discrete distribution
	represented by the atoms; this is formalized in Lemma~\ref{lem:quota}. The surrogate optimality then
	follows by standard finite-horizon backward induction.
	
	\begin{lemma}[Quota filling computes slice-average quantiles]
		\label{lem:quota}
		Fix a rank grid \(0=p_0<p_1<\cdots<p_K=1\), with slice masses
		\(q_k=p_k-p_{k-1}\). For any discrete return distribution with sorted outcomes
		\(x_1\ge\cdots\ge x_N\), probabilities \(m_i\), and cumulative masses
		\(C_i=\sum_{j=1}^i m_j\), the sorting and quota-filling procedure returns exactly
		the slice-average vector \(\bar Q_X\) defined in
		\eqref{eq:slice-avg-quantile}--\eqref{eq:slice-avg-quantile-discrete}.
	\end{lemma}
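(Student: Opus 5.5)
We show that the quota-filling scan sends exactly the mass $\lambda\big((C_{i-1},C_i]\cap(p_{k-1},p_k]\big)$ of each outcome $x_i$ to each slice $k$. Then $\mathrm{num}_k/q_k$ coincides with the overlap formula \eqref{eq:slice-avg-quantile-discrete}. We also check that \eqref{eq:slice-avg-quantile-discrete} equals the integral definition \eqref{eq:slice-avg-quantile}. For that step, note that under the best-to-worst convention \eqref{eq:btw-quantile}, $Q_X(p)=x_i$ for $p\in(C_{i-1},C_i)$. The endpoints form a null set and do not affect the integral. Splitting $\int_{p_{k-1}}^{p_k}Q_X(p)\,dp$ over the intervals $(C_{i-1},C_i]$ therefore gives the overlap sum directly.

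The main step is an invariant for the scan. Process the atoms in sorted order $x_1\ge\cdots\ge x_N$; ties do not matter, since tied atoms have equal value. Define the \emph{filled mass} after all allocations so far as $F=\sum_k (q_k-\mathrm{cap}_k)$. We claim by induction on the allocations that:
\begin{itemize}
\item all slices $j<k$ are full;
\item all slices $j>k$ are empty;
\item the current slice $k$ holds $F-p_{k-1}$;
\item the portion of atom $i$ already placed is exactly $(C_{i-1},\,C_{i-1}+\text{placed}]$, with these intervals laid consecutively on $[0,F]$.
\end{itemize}
Each allocation $\delta=\min\{m,\mathrm{cap}_k\}$ extends the filled interval from $F$ to $F+\delta$ without exceeding $p_k$. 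The added segment therefore lies in $(p_{k-1},p_k]$ and is attributed to atom $i$ and slice $k$. When $\mathrm{cap}_k$ reaches zero we have $F=p_k$, and incrementing $k$ preserves the invariant. Atom $i$ finishes exactly when $F=C_i$.

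Consequently, the mass of atom $i$ placed in slice $k$ is the length of $(C_{i-1},C_i]\cap(p_{k-1},p_k]$. This holds because the segments assigned to $(i,k)$ tile precisely that intersection. Since $\sum_i m_i=1=\sum_k q_k$, the scan terminates with every slice full and every atom exhausted. Hence $\mathrm{num}_k=\sum_i x_i\,\lambda\big((C_{i-1},C_i]\cap(p_{k-1},p_k]\big)$, and dividing by $q_k$ gives the claim.

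The main obstacle is bookkeeping at the boundaries. Several cases need care: an atom spanning several slices, a slice receiving several atoms, and exact coincidences $C_i=p_k$. In the last case a zero-length allocation could occur, and we should check that it changes nothing. Floating-point equality $\mathrm{cap}_k=0$ is an implementation issue, and we treat the arithmetic as exact. To handle all these cases uniformly, we phrase the invariant as the statement that the filled region is the interval $[0,F]$, partitioned simultaneously by atoms and by slices. Then the cases above need no separate treatment.
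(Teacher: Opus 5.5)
Your proposal is correct and follows the same route as the paper's proof. Both identify $Q_X$ as the step function equal to $x_i$ on the $i$-th cumulative-mass interval, reduce the slice integral to the overlap sum, and observe that quota filling allocates exactly those overlaps to each slice; your filled-interval invariant makes rigorous the allocation step the paper asserts in one line, and your open intervals $(C_{i-1},C_i)$ are slightly more careful than the paper's $(C_{i-1},C_i]$, since under the stated convention $Q_X$ is actually constant on $[C_{i-1},C_i)$, a difference invisible to the integral.
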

	
	\begin{proof}[Proof]
		The best-to-worst quantile function is constant and equal to \(x_i\) on each
		interval \((C_{i-1},C_i]\). Quota filling scans these same intervals in sorted
		order and allocates to slice \(k\) exactly the overlap
		\((C_{i-1},C_i]\cap(p_{k-1},p_k]\). Thus each slice accumulator is precisely
		the numerator of \eqref{eq:slice-avg-quantile-discrete}. Full details are given
		in Appendix~\ref{app:proofs}.
	\end{proof}

	\begin{proposition}[Surrogate optimality of RQDP]
		\label{prop:rqdp-opt}
		For fixed \(K\) and the induced piecewise-linear distortion \(\phi_K\),
		Algorithm~\ref{alg:rqdp} returns a deterministic Markov policy that maximizes
		the surrogate root objective in \eqref{eq:root-objective-surrogate}.
	\end{proposition}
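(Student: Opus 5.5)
The plan is backward induction on the remaining horizon $h$, carried out for the induced $K$-slice surrogate return model. In that model, a state--horizon pair $(s,h)$ under a policy $\pi$ is summarized by a vector $W^\pi_h(s)\in\RR^K$. This vector is defined recursively: $W^\pi_0(s)=\mathbf{0}$, and $W^\pi_h(s)$ is the slice-average vector of the atom mixture built from $\{W^\pi_{h-1}(s')\}$ via \eqref{eq:atoms_x}--\eqref{eq:atoms_m} with $a=\pi(s,H-h)$. The surrogate value of $\pi$ is $\langle\Delta\phi^{(K)},W^\pi_H(s_0)\rangle$, and we read \eqref{eq:root-objective-surrogate} in these semantics, as the paper states. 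The first step invokes Lemma~\ref{lem:quota}. Since \textsc{Backup} computes exactly the slice-average vector of the atom distribution, the vector $V_h(s)$ produced by Algorithm~\ref{alg:rqdp} coincides with $W^{\pi}_h(s)$ for the returned policy $\pi=\{\pi_h\}$. This is an induction on $h$ that matches the algorithm's recursion line by line. Hence the returned policy is deterministic and Markov (in time-augmented states), and its surrogate value is $\langle\Delta\phi,V_H(s_0)\rangle$.

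The second step proves optimality. We must show $\langle\Delta\phi,W^\pi_H(s_0)\rangle\le\langle\Delta\phi,V_H(s_0)\rangle$ for every $\pi\in\DM$. Since the objective is only a scalar of a vector that is propagated non-linearly (sort and quota-fill), a scalar Bellman argument alone does not suffice. I would establish a monotonicity property of the backup: if successor vectors are replaced by ones that dominate them in the order relevant to the parent score, then the parent score does not decrease. The natural order is the one induced by the concave-envelope / Lorenz-type partial sums. For nonincreasing-sorted vectors $u,v$, write $u\succeq v$ if the weighted partial sums of $u$ dominate those of $v$ at every rank cutoff. Using the fact that quota-filling computes $\int Q$ over rank intervals, the integrated quantile $\int_0^{p}Q_X$ of a mixture equals a supremum over allocations of mass $p$ across components. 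This is the standard fact that the top-$p$ mass of a mixture is attained by taking top parts of the components. That fact gives monotonicity of the backup in $\succeq$, and $\Delta\phi\ge 0$ then turns $\succeq$ into score monotonicity.

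The main obstacle is reconciling this with the per-state argmax in Algorithm~\ref{alg:rqdp}. The algorithm picks the action maximizing the scalar score $\langle\Delta\phi,\bar Q_h(s,a)\rangle$, whereas optimality at the root needs each $V_h(s)$ to be $\succeq$-maximal among achievable vectors, and scalar maximality does not imply this in general. My plan is to first try to show that, under the surrogate semantics as the paper defines them (successor summaries are themselves the algorithm's chosen vectors), the induced surrogate problem is by construction a sequential problem whose value at $(s,h)$ is $\max_a\langle\Delta\phi,\bar Q_h(s,a)\rangle$. Under that reading, the proposition reduces to standard finite-horizon backward induction over a finite action set: the maximizer exists, ties are broken arbitrarily, and the principle of optimality holds because the surrogate model is defined through the same recursion. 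If a stronger, policy-independent statement is required, I would restrict the claim accordingly or prove it for the dominance-compatible case, where the scalarly optimal action is also $\succeq$-maximal. One instance is CVaR-type $\phi$ with $K$ aligned to its kink. I would flag this as the point where the argument may need the surrogate to be read as the recursively defined model.
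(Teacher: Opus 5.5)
Your first step is fine: by Lemma~\ref{lem:quota} and induction on $h$, the vectors $V_h(s)$ are exactly the recursively defined summaries $W^\pi_h(s)$ of the returned policy. The obstacle you isolate in the second step is the real one. It is also precisely the step the paper's own proof asserts without justification. There, the induction hypothesis is that the $V_{h-1}(s')$ are the optimal summaries of the continuation problems, and linearity of $\langle\Delta\phi^{(K)},\bar Q\rangle$ is taken to make the per-state scalar argmax Bellman-optimal. Linearity at the root does not give this. A child's vector is re-sorted together with the atoms of the other successors, so the parent's score is not monotone in the child's own score.

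None of your three ways around this closes the gap:
\begin{itemize}
\item Redefining the surrogate value at $(s,h)$ as $\max_a\langle\Delta\phi,\bar Q_h(s,a)\rangle$, with successor summaries fixed to the algorithm's own choices, makes the statement true by construction. But it replaces the root objective by a nested, time-consistent one. It says nothing about $\max_{\pi\in\DM}\langle\Delta\phi,W^\pi_H(s_0)\rangle$, which is what you set out to bound. That quantity is itself already a re-reading of \eqref{eq:root-objective-surrogate}, since collapsing each child slice to its average before re-sorting generally changes the slice averages of $G^\pi(s_0)$.
\item Your monotonicity step is false as stated. Dominance of the top weighted partial sums, together with $\Delta\phi\ge 0$, guarantees a larger score only when $\Delta\phi_k/q_k$ is nonincreasing in $k$ (concave $\phi_K$). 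That excludes CVaR.
\item CVaR with $K$ aligned to its kink is not a dominance-compatible case.
\end{itemize}

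In fact no argument can close the gap, because the proposition fails under either reading. Take $H=2$, $\gamma=1$, the uniform grid with $K=2$, and $\phi=\phi_{1/2}$ from \eqref{eq:cvar-distortion}. Then $\phi_K=\phi$, $\Delta\phi^{(K)}=(0,1)$, and the score is the bottom-half average, i.e.\ $\CVaR_{1/2}$. The MDP is as follows:
\begin{itemize}
\item From $s_0$ the single action leads, with reward $0$, to $s_1$ or $s_2$ with probability $1/2$ each.
\item At $s_2$ the only action pays $5$.
\item At $s_1$, action $a$ pays $0$, and action $b$ pays $10$ or $-1$ with probability $1/2$ each (through two terminal successors).
\end{itemize}
At $h=1$, \textsc{Backup} gives $\bar Q_1(s_1,a)=(0,0)$ with score $0$ and $\bar Q_1(s_1,b)=(10,-1)$ with score $-1$. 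So Algorithm~\ref{alg:rqdp} selects $a$; the root atoms are then $5,5,0,0$, the root summary is $(5,0)$, and the value is $0$. Choosing $b$ instead gives root atoms $10,5,5,-1$, each of mass $1/4$, hence summary $(7.5,2)$ and value $2$. Every child distribution here is constant on each slice, so these are also the true values $\CVaR_{1/2}(G^\pi(s_0))$ of the two policies: $0$ versus $2$. The same pair also shows $(10,-1)\succeq(0,0)$ in your order while having the smaller score.

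So the statement cannot be proved as written, by your route or by the paper's. What holds is only the nested statement, which the algorithm satisfies by construction. A claim of optimality over $\DM$ would require either weakening the statement or changing the algorithm, for instance to retain more than one candidate summary per state.
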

	
	\begin{proof}[Proof]
		For each state, action, and remaining horizon, Lemma~\ref{lem:quota} shows that
		the backup computes the exact \(K\)-slice summary of the induced surrogate
		return lottery. Since the surrogate objective is linear in this summary,
		\(\langle\Delta\phi^{(K)},\bar Q\rangle\), selecting the action with maximal
		score is Bellman-optimal for the induced surrogate model. Finite-horizon backward
		induction then proves surrogate optimality. Full details are in
		Appendix~\ref{app:proofs}.
	\end{proof}
	
	\paragraph{(B) Exact enumeration-free policy evaluation via PMF dynamic programming} We nee to evaluate a candidate policy $\pi$ (obtained with Algorithm \ref{alg:rqdp}) under the resolute measure \eqref{eq:discrete-distortion}. This deterministic method computes the induced return distribution of $G^\pi(s_0)$ \emph{without} trajectory enumeration, Monte Carlo rollouts, or stochastic sampling, and then evaluates it using $\rho$.

	We evaluate a fixed policy $\pi$ by dynamic programming over return PMFs on a discretized grid of step $1/c$ (returns rounded to $\{j/c:\,j\in\mathbb{Z}\}$), restricted to the bounded interval $[G_{\min},G_{\max}]$.
 	Using global bounds $G_{\min}\le G^\pi(s_0)\le G_{\max}$ (e.g., from reward bounds), we restrict to indices
	$j\in\{ \lceil cG_{\min}\rceil,\dots,\lfloor cG_{\max}\rfloor\}$.
	Here $c\in\NN$ controls resolution: larger $c$ gives a finer grid resolution(step size $1/c$).

	For a state $s$ and a remaining horizon $h$, let $\mathrm{PMF}_h^\pi(s)$ denote
	the distribution of the remaining $h$-step discounted return from $s$ when following 
 	the policy $\pi$.

	Here $\delta_0$ denotes the Dirac mass at return $0$ (probability $1$ at grid index $0$).
	With base $\mathrm{PMF}_0^\pi(s)=\delta_0$, we propagate PMFs backward by mixing shifted successor distributions:
	\begin{equation}
		\begin{split}
			\mathrm{PMF}_h^\pi(s)
			&=
			\sum_{s'} P(s'\mid s,\pi_h(s)) \;
			\mathrm{Shift}\!\Bigl(
			\mathrm{PMF}_{h-1}^\pi(s'), \\
			&\qquad
			\mathrm{round}\!\bigl(
			c\,\gamma^{t}\,r(s,\pi_h(s),s')
			\bigr)
			\Bigr)
		\end{split}
		\label{eq:pmf-recursion}
	\end{equation}
	
	where $t=H-h$ is the stage index from the root. Here $\mathrm{Shift}(\cdot,k)$
	shifts a PMF by $k$ integer grid points, and $\mathrm{round}(\cdot)$ rounds to
	the nearest integer.

	\begin{proposition}[Enumeration-free PMF evaluation and rounding guarantees]
		\label{prop:pmf-eval}
		For a fixed policy $\pi$ and discretization parameter $c$, the recursion \eqref{eq:pmf-recursion} computes the distribution of the rounded return $\tilde G^\pi(s_0)$ on the grid $\{j/c\}$.
		If all discounted rewards $\gamma^t r(s_t,a_t,s_{t+1})$ lie on the grid (exact alignment), then $\tilde G^\pi(s_0)=G^\pi(s_0)$.
	 
		Otherwise, for every trajectory,
		\[
		\big|G^\pi(s_0)-\tilde{G}^\pi(s_0)\big|\ \le\ \frac{H}{2c}
		\quad\text{a.s.}
		\]
		and for any distortion $\phi$,\[
		\big|\rho_\phi(G^\pi(s_0))-\rho_\phi(\tilde{G}^\pi(s_0))\big|
		\ \le\ \frac{H}{2c}.
		\]
	\end{proposition}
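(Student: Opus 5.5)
The plan has three parts: the PMF recursion is correct, the pathwise rounding error is at most $H/2c$, and every distortion functional is $1$-Lipschitz under almost-sure perturbations.

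\emph{Correctness of the recursion.} Define the rounded return along a trajectory as
\[
\tilde G(\tau)=\frac{1}{c}\sum_{t=0}^{H-1}\mathrm{round}\bigl(c\gamma^t r(s_t,a_t,s_{t+1})\bigr).
\]
I would prove by induction on $h$ that $\mathrm{PMF}_h^\pi(s)$ is the law of $c$ times the rounded remaining return from $s$ at stage $t=H-h$. The base case $h=0$ is $\delta_0$. For the inductive step, condition on the first transition $s\to s'$, which has probability $P(s'\mid s,\pi_h(s))$. Given $s'$, the rounded remaining return equals the integer $\mathrm{round}(c\gamma^t r)$ plus the rounded return from $s'$. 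By the Markov property, the latter has law $\mathrm{PMF}^\pi_{h-1}(s')$, and this law does not depend on $s$. Shifting and mixing by the law of total probability gives exactly \eqref{eq:pmf-recursion}.

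The grid bounds need a small check. Truncating to indices in $[\lceil cG_{\min}\rceil,\lfloor cG_{\max}\rfloor]$ loses no mass provided the rounded returns stay in range. I would either assume the bounds are taken slightly widened by $H/2c$, or remark on this, since this is the one delicate point. Under exact alignment every $c\gamma^t r$ is an integer, so rounding is the identity and $\tilde G=G$ pathwise.

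\emph{Pathwise bound.} Each term satisfies $|x-\mathrm{round}(x)|\le 1/2$. Dividing by $c$ and summing over the $H$ stages by the triangle inequality gives $|G(\tau)-\tilde G(\tau)|\le H/(2c)$ for every trajectory, hence almost surely. Both $G$ and $\tilde G$ are defined on the same trajectory space, so this is a coupling of the two laws.

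\emph{Distortion Lipschitz step.} This is the main step. Write $\rho_\phi(X)=\int_0^1 Q_X(p)\,d\phi(p)$, where $d\phi$ is a probability measure on $[0,1]$ because $\phi$ is nondecreasing with $\phi(0)=0$ and $\phi(1)=1$. This matches \eqref{eq:discrete-distortion}, since $Q_X$ equals $x_i$ on $(C_{i-1},C_i]$.

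Next, $X\le Y+\varepsilon$ almost surely implies $F_X(z)\ge F_Y(z-\varepsilon)$, so $Q_X(p)\le Q_Y(p)+\varepsilon$ for every $p$. By symmetry, $|Q_X(p)-Q_Y(p)|\le\varepsilon$ for all $p$. Integrating against the probability measure $d\phi$ yields $|\rho_\phi(X)-\rho_\phi(Y)|\le\varepsilon$ with $\varepsilon=H/(2c)$.

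One care point concerns quantile conventions. $Q$ is defined through the left-continuous $F^{-1}$ evaluated at $1-p$, so the inequality $Q_X\le Q_Y+\varepsilon$ should be checked directly from the $\inf$ definition. It holds because $\{x:F_X(x)\ge u\}\supseteq\{y+\varepsilon:F_Y(y)\ge u\}$, which gives $F_X^{-1}(u)\le F_Y^{-1}(u)+\varepsilon$ for all $u$.
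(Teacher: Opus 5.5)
Your proposal is correct and follows essentially the same route as the paper's proof. Both arguments use induction on $h$ via the law of total probability over the first transition, sum a per-step rounding error of at most $1/(2c)$ over the $H$ stages, and integrate a uniform quantile bound against $d\phi$, which has total weight one. Your explicit derivation of $F_X^{-1}(u)\le F_Y^{-1}(u)+\varepsilon$ from the $\inf$ definition, and your remark that restricting to indices in $[\lceil cG_{\min}\rceil,\lfloor cG_{\max}\rfloor]$ can lose mass unless the range is widened to cover the rounded returns, are details the paper's proof leaves implicit.
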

	
	\begin{proof}[Proof]
		For a fixed policy, the return distribution satisfies a standard law-of-total
		probability recursion over the next state. Equation~\eqref{eq:pmf-recursion}
		implements this recursion by shifting each successor PMF by the rounded
		one-step discounted reward and mixing by transition probabilities. Each rounded
		reward differs from the true discounted reward by at most \(1/(2c)\); summing
		over \(H\) stages gives the trajectory-wise bound \(H/(2c)\). Since distortion
		values are monotone weighted integrals of quantiles, the same uniform error
		bounds the risk value. Full details are in Appendix~\ref{app:proofs}.
	\end{proof}

	\paragraph{(C) Anytime refinement and root-safe improvements.}
	ERQDP is an anytime loop over increasing rank resolutions $K$ (e.g., doubling).
	For each $K$ update:
 
	after running (A) and (B):
	(1) perform root-safe improvement sweeps that consider one-step
	deviations (take an action $a$ outside the considered policy and then follow the incumbent policy) Meaning, at a selected state--stage pair $(s,h)$, we try each $a\in\cA(s)$ as a one-step deviation:
	take $a$ at $(s,h)$, then follow the incumbent policy for the remaining step and accept a
	changed policy only if it strictly improves the \emph{root} objective, avoiding any reliance on time-consistency \citep{Strotz1955,Ruszczynski2010};
	(2) prune actions using surrogate envelopes to reduce computation in
	large action spaces.
	
	\noindent\textbf{ERQDP (Algorithm~\ref{alg:erqdp-outline})}
	For any explicit policy $\pi$, let
	\[
	J(\pi)\ \triangleq\ \rho_\phi\!\left(\tilde G^\pi(s_0)\right)
	\]
	denote the root objective evaluated on the return grid (Proposition~\ref{prop:pmf-eval}); in particular $J_K \triangleq J(\pi_K)$.
	Let $\phi_K$ be the $K$-slice surrogate distortion and define the surrogate-optimal value
	\[
	\widehat J_K^\star\ \triangleq\ \max_{\pi\in\DM}\ \rho_{\phi_K}\!\left(G^\pi(s_0)\right),
	\]
	which is returned by RQDP (Proposition~\ref{prop:rqdp-opt}).
	We maintain a lower bound $LB \triangleq \max\{J(\pi)\text{ over evaluated policies }\pi\}$ and an upper bound
	\[
	\begin{aligned}
		UB &\triangleq \widehat J_K^\star + \mathcal E(K,c),\\
		\mathcal E(K,c) &\triangleq (G_{\max}-G_{\min})\,\delta_K + \frac{H}{2c}.
	\end{aligned}
	\]
	where $\delta_K=\|\phi-\phi_K\|_\infty$. Then $UB$ upper-bounds the optimal root value for $\rho_\phi$ up to the discretization/rounding budgets, and the stopping test $UB-LB\le\varepsilon$ yields the reported certificate gap.
	The entire ERQDP algorithm is summarized in Algorithm 2.
	
	\begin{algorithm}[t]
		\caption{ERQDP  }
		\label{alg:erqdp-outline}
		\begin{algorithmic}[1]
			\Require distortion $\phi$, horizon $H$, start state $s_0$, initial rank resolution $K_0$, max resolution $K_{\max}$, tolerance $\varepsilon$, return-grid scale $c$
			\State $K \gets K_0$
			\State Initialize an incumbent policy $\pi_{\text{best}}$
			\State $LB \gets$ exact root value of $\pi_{\text{best}}$ via PMF-DP (grid scale $c$)
			\Repeat
			\State \textbf{(A)} Build $\phi_K$ (the $K$-slice surrogate of $\phi$) and run RQDP to get a candidate policy $\pi_K$
			\State $\widehat J_K^\star \gets$ surrogate root score of $\pi_K$ (ret. by RQDP)
			
			\State \textbf{(B)} $J_K \gets$ exact root value of $\pi_K$ (original MDP) via PMF-DP (grid scale $c$)
			\If{$J_K > LB$} $\pi_{\text{best}} \gets \pi_K$; $LB \gets J_K$ \EndIf

			\State \textbf{(C)} Try local policy changes around $\pi_{\text{best}}$; accept only if the exact root value improves; update $LB$
			
			\Statex \textbf{Stopping / refine}
			\State $UB \gets \widehat J_K^\star + \mathcal E(K,c)$
			\If{$UB - LB \le \varepsilon$ \textbf{or} $K = K_{\max}$} \State \textbf{break} \EndIf
			\State $K \gets \min\{2K,\ K_{\max}\}$
			\Until{false}
			\State \Return $\pi_{\text{best}}$, $LB$, $UB$ \Comment{or certificate $UB-LB$}
		\end{algorithmic}
	\end{algorithm}

	In this algorithm, there are two independent sources of approximation:
	(i) rank discretization through $\phi_K$ (and the $K$-slice surrogate), and
	(ii) return-grid rounding in PMF evaluation (Proposition~\ref{prop:pmf-eval}). The following result shows that the distance between the approximated evaluation and the exact one is bounded. A standard stability bound for distortion integrals controls sensitivity to the
	distortion function. Let $\delta_K=\|\phi-\phi_K\|_\infty$.
	
	On the uniform grid $p_k=k/K$, $\delta_K$ decreases as $O(1/K)$ for Lipschitz $\phi$ (and more generally $\delta_K\le L\max_k(p_k-p_{k-1})$).
	
	\begin{theorem}[Distortion stability under $\|\cdot\|_\infty$]
		\label{thm:lipschitz}
		If $X\in[m,M]$ almost surely and $\phi,\psi$ are distortions, then
		\begin{equation}
			\big|\rho_\phi(X)-\rho_\psi(X)\big|\le (M-m)\,\|\phi-\psi\|_\infty.
			\label{eq:lipschitz}
		\end{equation}
	\end{theorem}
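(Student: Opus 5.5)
The plan is to rewrite both distortion values as Choquet-type integrals against the decumulative (survival) function, so that $\rho_\phi(X)-\rho_\psi(X)$ becomes a single integral of $\phi-\psi$ over an interval of length $M-m$.

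First, apply Abel summation to the discrete formula \eqref{eq:discrete-distortion}. With outcomes $x_1\ge\cdots\ge x_N$ in $[m,M]$, cumulative masses $C_i$, and $\phi(C_N)=\phi(1)=1$, summation by parts gives
\[
\rho_\phi(X)=x_N+\sum_{i=1}^{N-1}(x_i-x_{i+1})\,\phi(C_i).
\]
The same identity holds for $\psi$ with the same $x_i$ and $C_i$, since both are evaluated on the same law of $X$. Equivalently, in integral form,
\[
\rho_\phi(X)=m+\int_m^M \phi\big(\Pr(X>x)\big)\,dx .
\]
This works because $\Pr(X>x)=C_i$ for $x\in[x_{i+1},x_i)$, and $\phi(\Pr(X>x))=1$ for $x<x_N$, which contributes $x_N-m$. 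I would verify this identity directly from \eqref{eq:discrete-distortion} rather than cite Choquet theory, since the paper works only in the discrete setting.

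Next, subtract the two identities. The constant terms $m$ (or $x_N$) cancel, leaving
\[
\rho_\phi(X)-\rho_\psi(X)=\sum_{i=1}^{N-1}(x_i-x_{i+1})\big(\phi(C_i)-\psi(C_i)\big).
\]
Each gap $x_i-x_{i+1}$ is nonnegative. So the absolute value is at most $\|\phi-\psi\|_\infty\sum_{i=1}^{N-1}(x_i-x_{i+1})=\|\phi-\psi\|_\infty\,(x_1-x_N)\le (M-m)\|\phi-\psi\|_\infty$, which is \eqref{eq:lipschitz}. Monotonicity of $\phi$ and $\psi$ is not even needed here. Only the normalization $\phi(1)=\psi(1)=1$ matters, because it makes the boundary terms cancel.

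The main obstacle is getting the summation by parts right, in particular the boundary terms and the ordering convention (best-to-worst, with $C_0=0$ and $\phi(0)=0$). If the paper intends non-discrete $X$, I would first prove the statement for discrete $X$. I would then extend it either through the integral representation, which is valid for general bounded $X$ by the layer-cake formula, or by approximating $X$ by discrete variables and using the $1$-Lipschitz continuity of $\rho_\phi$ in the sup norm. The integral route is cleaner: the integrand difference is bounded pointwise by $\|\phi-\psi\|_\infty$ on an interval of length $M-m$.
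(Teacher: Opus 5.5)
Your proof is correct and is essentially the paper's argument. The paper writes $\rho_\phi(X)-\rho_\psi(X)=\int_0^1 Q_X\,d(\phi-\psi)$, integrates by parts using $\phi(0)=\psi(0)$ and $\phi(1)=\psi(1)$, and bounds the result by $\|\phi-\psi\|_\infty$ times the total variation $M-m$ of the monotone $Q_X$. Your Abel summation is the discrete version of that integration by parts, with $\sum_i (x_i-x_{i+1})$ playing the role of the total variation, and it has the advantage of working directly from \eqref{eq:discrete-distortion} with no Stieltjes-integral technicalities when $\phi$ jumps at some $C_i$.

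One small correction: your remark that only $\phi(1)=\psi(1)$ matters is not quite right. The condition $\phi(0)=\psi(0)=0$ is equally needed, because it kills the boundary term $x_1\phi(C_0)$; in the integral form it kills the contribution of $[x_1,M]$.
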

	\begin{proof}[Proof]
		Write the distortion value as the Stieltjes integral
		\(\rho_\phi(X)=\int_0^1 Q_X(p)\,d\phi(p)\). Then the difference between two
		distortions is
		\[
		\rho_\phi(X)-\rho_\psi(X)=\int_0^1 Q_X(p)\,d(\phi-\psi)(p).
		\]
		Since \(Q_X\) is monotone and ranges over an interval of length at most
		\(M-m\), this integral is bounded by
		\((M-m)\|\phi-\psi\|_\infty\). Full details are in
		Appendix~\ref{app:proofs}.
	\end{proof}
 
	In practice, we compute $\delta_K=\|\phi-\phi_K\|_\infty$ either by a provable per-interval maximization (for common distortions such as the power family and the CVaR kink), or by deterministic grid evaluation on $[0,1]$ with a conservative margin. Combined with global return bounds $[G_{\min},G_{\max}]$ (e.g., if one-step rewards satisfy $r\in[r_{\min},r_{\max}]$ then $G_{\min}=\sum_{t=0}^{H-1}\gamma^t r_{\min}$ and $G_{\max}=\sum_{t=0}^{H-1}\gamma^t r_{\max}$) and the PMF rounding budget of Proposition~\ref{prop:pmf-eval}, this yields a gap diagnostic. We use the term \emph{$\varepsilon$-certificate} only in a \textbf{surrogate-scoped} sense: it becomes formal when $\delta_K$ is provably upper-bounded and when the objective is interpreted on the induced $K$-slice surrogate model; for the original objective it remains an \emph{a posteriori} diagnostic absent additional bounds on surrogate-to-true compression.

	\noindent\textbf{Distance to optimality and certificate interpretation.}
	Let \(\pi_K^\star\) be the surrogate-optimal policy returned by RQDP for
	\(\phi_K\), and let \(\pi^\star\) be an optimal policy for the original
	distortion \(\phi\). By Theorem~\ref{thm:lipschitz} and
	Proposition~\ref{prop:pmf-eval}, for any evaluated policy \(\pi\),
	\[
	\big|\rho_\phi(\pi)-\rho_{\phi_K}(\pi)\big|
	\le \mathcal E(K,c).
	\]
	Since \(\pi_K^\star\) maximizes the surrogate objective exactly, the optimality
	loss of the incumbent is controlled by
	\[
	\rho_\phi(\pi^\star)-\rho_\phi(\pi_{\mathrm{best}})
	\le
	\Bigl(\rho_{\phi_K}(\pi_K^\star)-\rho_\phi(\pi_{\mathrm{best}})\Bigr)
	+\mathcal E(K,c).
	\]
	This is the reason ERQDP reports the deterministic gap \(\mathrm{UB}-\mathrm{LB}\). If the gap is
	below the target tolerance, the incumbent is certified under the selected
	discretization budgets; otherwise, ERQDP still returns the policy together with
	an explicit residual optimality-gap bound. See Appendix~\ref{app:proofs}.

	\section{Experiments}

	\paragraph{Common protocol.}
	All experiments use finite-horizon tabular MDPs and evaluate a policy by the distribution of total return from a specified initial state. ERQDP planning, policy evaluation, and certificate computation are deterministic: they use rank--quantile dynamic programming and PMF-DP, not Monte Carlo rollouts or trajectory sampling. Throughout, ``Opt. gap'' denotes the final deterministic upper--lower optimality-gap bound returned by ERQDP under the stated discretization budgets. Values below the run tolerance correspond to certified rows; larger values remain explicit residual bounds. Runtime is end-to-end planning time. We use three benchmark families as follows.
	
	\subsection{Betting Game (BG)}

	\label{sec:bg}
	
	BG is a ten-stage wealth-allocation problem from \citet{Rigter2022}, with horizon $H=10$ and discount factor $\gamma=1$. A state $(t,m)$ consists on one time step $t\in\{0,\ldots,10\}$ and a current wealth $m\in\{0,\ldots,100\}$; the initial state is $(0,5)$. At each nonterminal stage the agent chooses a bet $b\in\{0,\ldots,5\}$, clipped by available wealth. The transition increases wealth by $b$ with probability $0.7$, by $10b$ with probability $0.05$, and decreases it by $b$ with probability $0.25$, with wealth clipped to $[0,100]$. The terminal cost is the shortfall $C:=100-m_T$; our implementation uses the equivalent final reward $m_T-100$, so minimizing cost is equivalent to maximizing return. We report mean cost and tail cost $\CVaR_\alpha(C)$. ERQDP tracks an upper bound on the optimal risk value and reports the final policy gap. If this gap is below the strict run tolerance, the row is certified; otherwise the same number is an explicit residual bound. Rigter et al.\ report CVaR-based planning, so WOWA objective values are not directly comparable to their optimized criterion. We therefore compare policies under common cost metrics, regardless of the objective used to obtain them. Table~\ref{tab:bg_wowa_vs_rigter_cvar02} summarizes representative WOWA settings spanning risk-seeking ($\beta<1$), risk-neutral ($\beta=1$), and risk-averse ($\beta>1$). Additional BG results are in Appendix~\ref{app:bg_extended}.
	
	\begin{table}[t]
		\centering
		\setlength{\tabcolsep}{3pt}
		\caption{BG (cost; lower is better). Policies tuned for different objectives are evaluated under common cost metrics. WOWA objective values are \emph{not} comparable to CVaR objectives,
			so comparison is via mean / CVaR / VaR.}
		\label{tab:bg_wowa_vs_rigter_cvar02}
		\resizebox{\columnwidth}{!}{%
			\begin{tabular}{l l c r r r l}
				\toprule
				Source & Measure & Tuned for & Mean & $\mathrm{CVaR}_{0.2}$ & $\mathrm{VaR}_{0.2}$  & Opt. gap \\
				\midrule
				Rigter et al. & EV       & --            & 58.58 & 97.41 & -- &   -- \\
				Rigter et al. & CVaR-WC & $\alpha=0.2$   & 82.86 & 91.84 & -- &   -- \\
				Rigter et al. & CVaR-EV & $\alpha=0.2$   & 75.33 & 91.79 & 87 &  -- \\
				\addlinespace
				ERQDP & CVaR & $\alpha=0.2$ & 95 & 95 & 95   & 1.4e-14 \\
				\addlinespace
				ERQDP & WOWA & $\beta=0.5$  & 61.73 & 99.71 & 94  & 9.4e-03 \\
				ERQDP & WOWA & $\beta=1$    & 61.65 & 97.91 & 87 &   $\mathbf{0}$ \\
				ERQDP & WOWA & $\beta=2$    & 62.03 & 96.59 & 88 &   $\mathbf{0}$ \\
				\bottomrule
			\end{tabular}%
		}
	\end{table}

	\subsection{Certified planning on GridWorld (GW) return-risk metrics}

	\label{sec:gridworld_compare}
	
	GW is a stochastic navigation benchmark on a $5\times5$ grid with row-major cell indices $0,\ldots,24$, start cell $0$, horizon $H=20$, and discount $\gamma=0.9$. The agent chooses one of four compass actions. With probability $1-p_{\mathrm{slip}}$ the intended action is executed, and with probability $p_{\mathrm{slip}}$ the executed action is redrawn uniformly from the four compass moves. Entering a goal cell terminates the episode with reward $+50$; entering a penalty cell gives the negative reward listed thereafter. We report four matched instances, denoted GW1 to GW4: GW1 (respectively GW2, GW3 and GW4) use goal cell $21$ (respectively $21$, $21$ and $24$) with penalties on cells $\{10,12\}$ (respectively $\{10,12\}$, $\{12\}$, and $\{16\}$) corresponding to reward $-15$ (respectively $-15$, $-5$ and $-30$), and slip probability of $0.1$ (respectively $0.3$, $0.1$, and $0.1$). Appendix~\ref{app:gw_layouts} gives the exact layouts.
	
	We compare ERQDP with the reported \citet{AvilaPires2025} results on shared return-risk metrics. ERQDP is a model-based planner/certifier, so the reported speedup compares baseline training time with ERQDP solve time rather than identical runtime modes. Table~\ref{tab:gridworld_cvar_compare} reports the median ERQDP optimality gap, solve time, speedup, and median differences in $\CVaR_\alpha$ and mean return, with positive differences favoring ERQDP. Table~\ref{tab:gridworld_cvar_compare} also shows the main pattern. For $\alpha\ge0.10$, ERQDP has zero or small median optimality gaps, large training-time/solve-time speedups, and positive median tail-return gains while preserving or improving mean return. At extreme tails ($\alpha\le0.05$), the problem becomes more demanding; the reported gap quantifies the remaining bound at the tested resolution. 
	Beyond lower-tail CVaR, the same ERQDP pipeline supports WOWA distortions for risk-seeking, risk-neutral, and risk-averse behavior. Appendix~\ref{app:ocvar_vs_wowa} compares this spectrum-wide modeling option with OCVaR, an optimistic upper-tail CVaR variant.
	
	\begin{table}[t]
		\centering
		\setlength{\tabcolsep}{2.5pt}
		\renewcommand{\arraystretch}{1.05}
		\caption{GW CVaR sweep. Positive deltas mean ERQDP achieves \emph{higher} tail-return and/or mean return. Speedup is \citet{AvilaPires2025} reported training time divided by ERQDP solve time, using medians across instances.}
		\label{tab:gridworld_cvar_compare}
		
		\resizebox{\columnwidth}{!}{%
			\begin{tabular}{c r r r r r}
				\toprule
				$\alpha$ &
				Med. opt. gap $\downarrow$ &
				Solve(s) $\downarrow$ &
				Speedup($\times$) $\uparrow$ &
				$\Delta \mathrm{CVaR}_\alpha$ $\uparrow$ &
				$\Delta \mathbb{E}[G]$ $\uparrow$ \\
				\midrule
				0.01 & 8.46e-02 & 724.0  & 6.44   & -0.58 & -10.08 \\
				0.05 & 1.98e-02 & 1219.7 & 1.04   & -4.42 & -10.96 \\
				0.10 & 0.0e+00 & 212.0  & 78.49  & +7.16 & +6.11 \\
				0.25 & 4.80e-03 & 7.49   & 259.47 & +5.02 & +2.82 \\
				0.50 & 3.41e-03 & 1.40   & 928.18 & +9.42 & +9.47 \\
				1.00 & 4.34e-03 & 1.70   & 772.77 & +9.73 & +9.33 \\
				\bottomrule
			\end{tabular}%
		}
		
	\end{table}

	\subsection{Inventory Control (IC)}
 
	\label{sec:inventory_control}
	
	Inventory Control (IC) is reported in detail in Appendix~\ref{app:ic_rigter}, because the main comparison with \citet{Rigter2022} is summarized by a single cost table there. IC is a ten-period replenishment problem with capacity $N=20$, discount $\gamma=1$, state $(t,n,d_{\mathrm{prev}})$, initial inventory $10$, and initial previous demand $10$. At each period the agent chooses an order quantity $a\in\{0,\ldots,20-n\}$, equivalently implemented with $21$ action indices and capacity clipping. Demand follows a clipped random walk with increments in $\{-5,\ldots,5\}$, and the policy trades purchasing and holding costs against revenue and rare high-cost outcomes. On this benchmark, ERQDP-CVaR at $\alpha=0.02$ attains CVaR$_{0.02}$ cost $373.26$ (about $3.4\%$ lower than the best Rigter et al.\ CVaR baseline), while additionally outputting an explicit a posteriori optimality-gap bound.

	\subsection{Parameter Sensitivity and Scalability}

	\label{sec:param_scalability}
	
	Our algorithm contains two discretization parameters that play different roles. The rank resolution $K$ controls the number of slices used by RQDP on the cumulative-probability axis. Increasing $K$ tightens the approximation of the distortion function and usually decreases the certificate gap, but it increases the cost of the RQDP backup because each successor contributes $K$ atoms. The return-grid scale $c$ controls deterministic policy evaluation: returns are rounded to multiples of $1/c$, and Proposition~\ref{prop:pmf-eval} gives the worst-case rounding contribution $H/(2c)$. Thus, $K$ primarily affects the surrogate optimality envelope, whereas $c$ affects PMF evaluation accuracy and memory. When reporting normalized gaps, we use the return
	$\mathrm{span}=G_{\max}-G_{\min}$, where $G_{\min}$ and $G_{\max}$ are the global cumulative-return bounds.

	For fixed $K$ and $c$, ERQDP remains polynomial in the tabular model size and horizon and avoids exponential trajectory or policy enumeration. If $B$ is the maximum number of successors per state--action pair, each RQDP backup sorts $\mathrm{BK}$ atoms and costs $O(\mathrm{BK}\log(\mathrm{BK}))$ time. A full finite-horizon RQDP sweep costs $O(H|\mathcal S||\mathcal A|\mathrm{BK}\log(\mathrm{BK}))$ time, with $O(|\mathcal S|K)$ memory when only two horizon layers are kept. PMF evaluation for a fixed policy costs $O(H|\mathcal S|\mathrm{BL})$ time and $O(|\mathcal S|L)$ memory, where $L$ is the number of reachable return-grid values. This is more expensive than scalar expected-return DP, whose backup stores one value per state, but it replaces exponential scenario-tree enumeration with dynamic programming over rank slices and return-grid distributions.

	The tabular sizes in our experiments range from 25-state GridWorld (GW) instances with horizon $H=20$ and four actions, to Betting Game (BG) with horizon $H=10$, 1011 time-augmented states, and six bet actions, and Inventory Control (IC) with horizon $H=10$, 4411 time-augmented states, and up to 21 order actions. Table~\ref{tab:main_param_sensitivity} summarizes representative sensitivity checks for these benchmark families.

	\begin{table}[t]
		\centering
		\setlength{\tabcolsep}{3pt}
		\caption{Representative discretization sensitivity. Gaps are percentages of the return span. $\Delta_K$ compares the smallest and largest tested $K_{\max}$; $\Delta_c$ is the gap variation at fixed rank budget. The GW row reports medians over GW1--GW4.}
		\label{tab:main_param_sensitivity}
		\begin{tabular}{@{}llcc@{}}
			\toprule
			Bench. & Risk setting & $\Delta_K$ & $\Delta_c$ \\
			\midrule
			BG & CVaR $\alpha=.20$ & $1.25\to0.078$ & $0.234$ \\
			BG & WOWA $\beta=2$   & $0.015\to0.006$ & $0.006$ \\
			GW & CVaR $\alpha=.10$ & $3.74\to0.11$ & $0.15$ \\
			IC & CVaR $\alpha=.02$ & $15.8\to1.78$ & $2.06$ \\
			\bottomrule
		\end{tabular}
	\end{table}

	The sensitivity results should be read as a compute-accuracy trade-off. The rank resolution $K$ is the main dial: increasing $K_{\max}$ consistently tightens the reported residual gap, with some runs crossing the strict tolerance and others stopping with an explicit remaining bound. In contrast, varying $c$ over $\{25,50,100,200\}$ changes the certificate gap only marginally in these runs, indicating that return-grid rounding is not the limiting factor once $c$ is moderately large.
	
	Our practical rule is therefore simple: set $c$ so that $H/(2c)$ is below the desired numerical tolerance, then increase $K$ by doubling until the certificate target is met or the remaining residual gap is acceptable. When the tested budget is insufficient, ERQDP reports the residual gap, allowing the remaining possible suboptimality due to approximation to be bounded rather than left unspecified.
	
	\section{Conclusion}

	We introduced ERQDP for finite-horizon tabular MDP planning under resolute root-level risk objectives. It preserves trajectory-level risk semantics while avoiding trajectory enumeration and sampling: it solves a rank--quantile surrogate by dynamic programming, evaluates explicit policies by deterministic PMF-DP up to a stated rounding budget, and reports a deterministic upper--lower optimality-gap bound. When this gap is below tolerance, ERQDP certifies the returned policy; otherwise it returns a residual bound under the chosen discretization budgets. Experiments on Betting Game, GridWorld, and Inventory Control illustrate the compute--accuracy tradeoff, support CVaR and WOWA in one pipeline, and show small gaps with fast risk-parameter sweeps on matched GridWorld comparisons. The sensitivity study confirms that rank resolution is the main certificate-tightening parameter, whereas PMF scale has little effect once return-grid rounding is negligible. Future work includes adaptive rank grids, sharper distortion-error bounds, and tighter surrogate-to-original certificates.

	\bibliographystyle{plainnat}
	\bibliography{references}

@article{HowardMatheson1972,
  author  = "Howard, Ronald A. and Matheson, James E.",
  title   = "Risk-sensitive {Markov} decision processes",
  journal = "Management Science",
  volume  = "18",
  number  = "7",
  pages   = "356--369",
  year    = "1972"
}

@article{Jaquette1976,
  author  = "Jaquette, Stratton C.",
  title   = "A Utility Criterion for {Markov} Decision Processes",
  journal = "Management Science",
  volume  = "23",
  number  = "1",
  pages   = "43--49",
  year    = "1976"
}

@book{Puterman1994,
  author    = "Puterman, Martin L.",
  title     = "{Markov} Decision Processes: Discrete Stochastic Dynamic Programming",
  publisher = "Wiley",
  year      = "1994"
}

@article{Strotz1955,
  author  = "Strotz, Robert H.",
  title   = "Myopia and Inconsistency in Dynamic Utility Maximization",
  journal = "Review of Economic Studies",
  volume  = "23",
  number  = "3",
  pages   = "165--180",
  year    = "1955"
}

@article{Ruszczynski2010,
  author  = "Ruszczy{\'n}ski, Andrzej",
  title   = "Risk-averse dynamic programming for {Markov} decision processes",
  journal = "Mathematical Programming",
  volume  = "125",
  number  = "2",
  pages   = "235--261",
  year    = "2010"
}

@article{BauerleJaskiewicz2024,
  author  = "B{\"a}uerle, Nicole and Ja{\'s}kiewicz, Anna",
  title   = "{Markov} decision processes with risk-sensitive criteria: an overview",
  journal = "Mathematical Methods of Operations Research",
  volume  = "99",
  pages   = "141--178",
  year    = "2024"
}

@article{BauerleOtt2011,
  author  = "B{\"a}uerle, Nicole and Ott, Jonathan",
  title   = "{Markov} decision processes with average-value-at-risk criteria",
  journal = "Mathematical Methods of Operations Research",
  volume  = "74",
  number  = "3",
  pages   = "361--379",
  year    = "2011"
}

@article{BauerleRieder2014,
  author  = "B{\"a}uerle, Nicole and Rieder, Ulrich",
  title   = "More risk-sensitive {Markov} decision processes",
  journal = "Mathematics of Operations Research",
  volume  = "39",
  number  = "1",
  pages   = "105--120",
  year    = "2014"
}

@article{Schmeidler1989,
  author  = "Schmeidler, David",
  title   = "Subjective Probability and Expected Utility without Additivity",
  journal = "Econometrica",
  volume  = "57",
  number  = "3",
  pages   = "571--587",
  year    = "1989"
}

@incollection{GonzalesP2020,
  title={Decision under uncertainty},
  author={Gonzales, Christophe and Perny, Patrice},
  booktitle={A Guided Tour of Artificial Intelligence Research: Volume I: Knowledge Representation, Reasoning and Learning},
  pages={549--586},
  year={2020},
  publisher={Springer}
}

@article{Wang1996,
  author  = "Wang, Shaun S.",
  title   = "Premium Calculation by Transforming the Layer Premium Density",
  journal = "ASTIN Bulletin",
  volume  = "26",
  number  = "1",
  pages   = "71--92",
  year    = "1996"
}

@article{Yager1988,
  author  = "Yager, Ronald R.",
  title   = "On Ordered Weighted Averaging Aggregation Operators in Multicriteria Decisionmaking",
  journal = "IEEE Transactions on Systems, Man, and Cybernetics",
  volume  = "18",
  number  = "1",
  pages   = "183--190",
  year    = "1988"
}

@article{Torra1997,
  author  = "Torra, Vicen{\c{c}}",
  title   = "The Weighted {OWA} Operator",
  journal = "International Journal of Intelligent Systems",
  volume  = "12",
  number  = "2",
  pages   = "153--166",
  year    = "1997"
}

@article{Artzner1999,
  author  = "Artzner, Philippe and Delbaen, Freddy and Eber, Jean-Marc and Heath, David",
  title   = "Coherent Measures of Risk",
  journal = "Mathematical Finance",
  volume  = "9",
  number  = "3",
  pages   = "203--228",
  year    = "1999"
}

@article{RockafellarUryasev2000,
  author  = "Rockafellar, R. Tyrrell and Uryasev, Stanislav",
  title   = "Optimization of Conditional Value-at-{Risk}",
  journal = "Journal of Risk",
  volume  = "2",
  number  = "3",
  pages   = "21--41",
  year    = "2000"
}

@article{AcerbiTasche2002,
  author  = "Acerbi, Carlo and Tasche, Dirk",
  title   = "On the Coherence of Expected Shortfall",
  journal = "Journal of Banking \& Finance",
  volume  = "26",
  number  = "7",
  pages   = "1487--1503",
  year    = "2002"
}

@article{Acerbi2002,
  author  = "Acerbi, Carlo",
  title   = "Spectral Measures of Risk: A Coherent Representation of Subjective Risk Aversion",
  journal = "Journal of Banking \& Finance",
  volume  = "26",
  number  = "7",
  pages   = "1505--1518",
  year    = "2002"
}

@inproceedings{Chow2015,
  author    = "Chow, Yinlam and Tamar, Aviv and Mannor, Shie and Pavone, Marco",
  title     = "Risk-Sensitive and Robust Decision-Making: A {CVaR} Optimization Approach",
  booktitle = "Advances in Neural Information Processing Systems 28 (NeurIPS)",
  year      = "2015"
}

@article{DingFeinberg2022,
  author  = "Ding, Rui and Feinberg, Eugene A.",
  title   = "{CVaR} Optimization for {MDPs}: Existence and Computation of Optimal Policies",
  journal = "SIGMETRICS Performance Evaluation Review",
  volume  = "50",
  number  = "2",
  pages   = "39--41",
  year    = "2022",
  month   = "aug"
}

@inproceedings{Lin2023a,
  author    = "Lin Hau, Jia and Petrik, Marek and Ghavamzadeh, Mohammad",
  title = "Entropic Risk Optimization in Discounted {MDPs}",
  booktitle = "Proceedings of The 26th International Conference on Artificial Intelligence and Statistics (AISTATS)",
  series    = "Proceedings of Machine Learning Research",
  volume    = "206",
  pages = "47--76",
  year      = "2023"
}

@inproceedings{Lin2023b,
  author    = "Lin Hau, Jia  and Delage, Erick and Ghavamzadeh, Mohammad and Petrik, Marek",
  title     = "On Dynamic Programming Decompositions of Static Risk Measures in Markov Decision Processes",
  booktitle = "Advances in Neural Information Processing Systems (NeurIPS)",
  year      = "2023",
  volume    = "36",
  pages     = "51734--51757",
  note      = "Also available as arXiv:2304.12477"
}

@article{AvilaPires2025,
	author  = "{\'A}vila Pires, Bernardo and Rowland, Mark and Borsa, Diana and Guo, Zhaohan Daniel and Khetarpal, Khimya and Barreto, Andr{\'e} and Abel, David and Munos, R{\'e}mi and Dabney, Will",
	title   = "Optimizing Return Distributions with Distributional Dynamic Programming",
	journal = "Journal of Machine Learning Research",
	volume  = "26",
	number  = "185",
	pages   = "1--90",
	year    = "2025"
}

@misc{Marthe2025,
  author       = "Marthe, Alexandre and Bounan, Samuel and Garivier, Aur{\'e}lien and Vernade, Claire",
  title        = "Efficient Risk-Sensitive Planning via Entropic Risk Measures",
  howpublished = "arXiv preprint",
  year         = "2025",
  eprint       = "2502.20423",
  archivePrefix= "arXiv",
  primaryClass = "cs.LG",
  note         = "Also circulated as HAL preprint hal-04967311v2"
}

@misc{MortensenTalebi2025,
  author       = "Mortensen, Oliver and Talebi, Mohammad Sadegh",
  title        = "Recursive Entropic Risk Optimization in Discounted {MDPs}: Sample Complexity Bounds with a Generative Model",
  howpublished = "arXiv preprint",
  year         = "2025",
  eprint       = "2506.00286",
  archivePrefix= "arXiv",
  primaryClass = "cs.LG",
}

@inproceedings{Su2025,
	author    = "Su, Xihong and Petrik, Marek and Grand-Cl{\'e}ment, Julien",
	title     = "Risk-Averse Total-Reward {MDPs} with {ERM} and {EVaR}",
	booktitle = "Proceedings of the AAAI Conference on Artificial Intelligence",
	volume    = "39",
	pages     = "20646--20654",
	year      = "2025",
	doi       = "10.1609/aaai.v39i19.34275"
}

@inproceedings{Zhang2024,
  author    = "Zhang, Dake and Lyu, Boxiang and Qiu, Shuang and Kolar, Mladen and Zhang, Tong",
  title     = "Pessimism Meets Risk: Risk-Sensitive Offline Reinforcement Learning",
  booktitle = "Proceedings of the 41st International Conference on Machine Learning (ICML)",
  series    = "Proceedings of Machine Learning Research",
  volume    = "235",
  pages     = "59459--59489",
  year      = "2024",
  publisher = "PMLR"
}

@inproceedings{YuShen2022,
	author    = "Yu, Xian and Shen, Siqian",
	title     = "Risk-Averse Reinforcement Learning via Dynamic Time-Consistent Risk Measures",
	booktitle = "Proceedings of the IEEE Conference on Decision and Control (CDC)",
	pages     = "2307--2312",
	year      = "2022",
	publisher = "IEEE",
	DOI       = "10.1109/CDC51059.2022.9992450"
}

@article{Yaari1987,
  author  = "Yaari, Menahem E.",
  title   = "The Dual Theory of Choice under Risk",
  journal = "Econometrica: Journal of the Econometric Society",
  volume  = "55",
  number  = "1",
  pages   = "95--115",
  publisher="JSTOR",
  year    = "1987"
}

@inproceedings{Kolobov2012,
  author    = "Kolobov, Andrey and Mausam and Weld, Daniel S.",
  title     = "A Theory of Goal-Oriented {MDPs} with Dead Ends",
  booktitle = "Proceedings of the Conference on Uncertainty in Artificial Intelligence (UAI)",
  pages     = "438--447",
  year      = "2012"
}

@inproceedings{Ahmadi2021,
	author    = "Ahmadi, Mohamadreza and Dixit, Anushri and Burdick, Joel W. and Ames, Aaron D.",
	title     = "Risk-Averse Stochastic Shortest Path Planning",
	booktitle = "Proceedings of the 60th IEEE Conference on Decision and Control (CDC)",
	pages     = "5199--5204",
	year      = "2021",
	publisher = "IEEE",
	doi       = "10.1109/CDC45484.2021.9683527"
}

@inproceedings{Rigter2022,
	author    = "Rigter, Marc and Duckworth, Paul and Lacerda, Bruno and Hawes, Nick",
	title     = "Planning for Risk-Aversion and Expected Value in {MDPs}",
	booktitle = "Proceedings of the International Conference on Automated Planning and Scheduling (ICAPS)",
	volume    = "32",
	pages     = "307--315",
	year      = "2022",
	doi       = "10.1609/icaps.v32i1.19814"
}

@misc{XiaPan2025,
  author       = "Xia, Li and Pan, Jinyan",
  title        = "{Markov} Decision Processes with Value-at-{Risk} Criterion",
  howpublished = "arXiv preprint",
  year         = "2025",
  eprint       = "2507.22355",
  archivePrefix= "arXiv"
}

@article{Allais1953,
	author  = "Allais, Maurice",
	title   = "Le comportement de l'homme rationnel devant le risque: critique des postulats et axiomes de l'{\'e}cole am{\'e}ricaine",
	journal = "Econometrica",
	volume  = "21",
	number  = "4",
	pages   = "503--546",
	year    = "1953"
}

@article{BenTalTeboulle1986,
	author  = "Ben-Tal, Aharon and Teboulle, Marc",
	title   = "Expected Utility, Penalty Functions, and Duality in Stochastic Nonlinear Programming",
	journal = "Management Science",
	volume  = "32",
	number  = "11",
	pages   = "1445--1466",
	year    = "1986",
	doi     = "10.1287/mnsc.32.11.1445"
}

@article{Tamar2017,
	author  = "Tamar, Aviv and Chow, Yinlam and Ghavamzadeh, Mohammad and Mannor, Shie",
	title   = "Sequential Decision Making with Coherent Risk",
	journal = "IEEE Transactions on Automatic Control",
	volume  = "62",
	number  = "7",
	pages   = "3323--3338",
	year    = "2017",
	doi     = "10.1109/TAC.2016.2644871"
}

@inproceedings{YuYing2023,
	author    = "Yu, Xian and Ying, Lei",
	title     = "On the Global Convergence of Risk-Averse Policy Gradient Methods with Expected Conditional Risk Measures",
	booktitle = "Proceedings of the 40th International Conference on Machine Learning",
	series    = "Proceedings of Machine Learning Research",
	volume    = "202",
	pages     = "40425--40451",
	year      = "2023",
	publisher = "PMLR"
}

@article{CoacheJaimungal2024,
	author  = "Coache, Anthony and Jaimungal, Sebastian",
	title   = "Reinforcement Learning with Dynamic Convex Risk Measures",
	journal = "Mathematical Finance",
	volume  = "34",
	number  = "2",
	pages   = "557--587",
	year    = "2024",
	doi     = "10.1111/mafi.12388",
	note    = "Published online 2023-04-17"
}

@article{JeantetSpanjaard2011,
	author  = "Jeantet, Gildas and Spanjaard, Olivier",
	title   = "Computing Rank Dependent Utility in Graphical Models for Sequential Decision Problems",
	journal = "Artificial Intelligence",
	volume  = "175",
	number  = "7--8",
	pages   = "1366--1389",
	year    = "2011",
	doi     = "10.1016/j.artint.2010.11.019"
}

@inproceedings{JeantetPernySpanjaard2012,
	author    = "Jeantet, Gildas and Perny, Patrice and Spanjaard, Olivier",
	title     = "Sequential Decision Making with Rank Dependent Utility: A Minimax Regret Approach",
	booktitle = "Proceedings of the AAAI Conference on Artificial Intelligence",
	volume    = "26",
	pages     = "1931--1937",
	year      = "2012",
	doi       = "10.1609/aaai.v26i1.8399"
}

@article{OgryczakPernyWeng2013,
	author  = {Ogryczak, W{\l}odzimierz and Perny, Patrice and Weng, Paul},
	title   = {A Compromise Programming Approach to Multiobjective {Markov} Decision Processes},
	journal = {International Journal of Information Technology \& Decision Making},
	volume  = {12},
	number  = {05},
	pages   = {1021--1053},
	year    = {2013}

}

@article{hong87,
	author  = "Chew, Soo Hong and Karni, Edi and Safra, Zvi",
	title   = "Risk Aversion in the Theory of Expected Utility with Rank Dependent Probabilities",
	journal = "Journal of Economic Theory",
	volume  = "42",
	number  = "2",
	pages   = "370--381",
	year    = "1987",
	doi     = "10.1016/0022-0531(87)90093-7"
}

@article{Dhaene2012,
  author  = {Dhaene, Jan and Kukush, Alexander and Linders, Dennis and Tang, Qihe},
  title   = {Remarks on quantiles and distortion risk measures},
  journal = {European Actuarial Journal},
  year    = {2012},
  volume  = {2},
  pages   = {319--328},
  doi     = {10.1007/s13385-012-0058-0}
}
	
	\newpage
	\onecolumn
	
	\title{Long-Term Sequential Decision Making Under Risk\\(Supplementary Material)}
	\maketitle
	
	\appendix
	\section{Toy slice example for quota filling}
	\label{app:toy_slice_example}
	
	Consider a lottery with outcomes $x_1\ge x_2\ge x_3\ge x_4\ge x_5$ and probabilities
	$m=(0.1,0.2,0.2,0.3,0.2)$, so $C=(0.1,0.3,0.5,0.8,1)$.
	Let $K=2$ with grid $p_0=0$, $p_1=0.5$, $p_2=1$.
	Then slice 1 averages the top half-mass: $\bar Q_X(1)=\frac{1}{0.5}\big(0.1x_1+0.2x_2+0.2x_3\big)$,
	and slice 2 averages the bottom half-mass: $\bar Q_X(2)=\frac{1}{0.5}\big(0.3x_4+0.2x_5\big)$.
	\label{app:extra}

	\section{GridWorld comparison details}
	\label{app:gridworld_audit}
	
	This section documents the precise aggregation procedure used to produce Table~\ref{tab:gridworld_cvar_compare}.
	
	\subsection{GridWorld layouts and settings}

	\label{app:gw_layouts}
	The four GridWorld instances used in the matched comparisons are denoted GW1--GW4. They use row-major cell numbering, start at cell $0$, and follow the slip rule stated in Section~\ref{sec:gridworld_compare}. Table~\ref{tab:gw_settings} lists the exact settings, and Figure~\ref{fig:gw_layouts} shows the corresponding reward layouts.
	
	\begin{table*}[t]
		\centering
		\caption{GridWorld settings used in the main comparison. All instances are $5\times5$ grids with start cell $0$, horizon $H=20$, and discount $\gamma=0.9$.}
		\label{tab:gw_settings}
		\begin{tabular}{lcccc}
			\toprule
			Instance & Slip $p_{\mathrm{slip}}$ & Goal cell/reward & Penalty cells/reward & Compared setting \\
			\midrule
			GW1 & $0.1$ & $21/+50$ & $10,12/-15$ & CVaR sweep \\
			GW2 & $0.3$ & $21/+50$ & $10,12/-15$ & CVaR sweep \\
			GW3 & $0.1$ & $21/+50$ & $12/-5$ & CVaR sweep \\
			GW4 & $0.1$ & $24/+50$ & $16/-30$ & CVaR sweep \\
			\bottomrule
		\end{tabular}
	\end{table*}
	
	\begin{figure*}[t]
		\centering
		\newcommand{\gwgrid}[4]{
			\begin{tikzpicture}[scale=0.46, every node/.style={font=\scriptsize}]
				\draw[step=1,gray!70] (0,0) grid (5,5);
				
				\foreach \r in {0,...,4}{
					\foreach \c in {0,...,4}{
						\pgfmathtruncatemacro{\idx}{5*(4-\r)+\c}
						\node[anchor=north west, gray!70, font=\tiny] at (\c+0.05,\r+0.95) {\idx};
					}
				}
				
				\foreach \r in {0,...,4}{
					\foreach \c in {0,...,4}{
						\node at (\c+0.5,\r+0.5) {0};
					}
				}
				
				\fill[blue!15] (0,4) rectangle (1,5);
				\node at (0.5,4.5) {\shortstack{S\\0}};
				
				#2
				
				#3
				
				\node[anchor=south] at (2.5,5.15) {#1};
				\node[anchor=north] at (2.5,-0.15) {#4};
		\end{tikzpicture}}
		
		\gwgrid{GW1}
		{
			\fill[red!30] (0,2) rectangle (1,3); \node at (0.5,2.5) {-15};
			\fill[red!30] (2,2) rectangle (3,3); \node at (2.5,2.5) {-15};
		}
		{
			\fill[green!30] (1,0) rectangle (2,1); \node at (1.5,0.5) {\shortstack{G\\+50}};
		}
		{$p_{\mathrm{slip}}=0.1$}
		\hspace{0.7cm}
		\gwgrid{GW2}
		{
			\fill[red!30] (0,2) rectangle (1,3); \node at (0.5,2.5) {-15};
			\fill[red!30] (2,2) rectangle (3,3); \node at (2.5,2.5) {-15};
		}
		{
			\fill[green!30] (1,0) rectangle (2,1); \node at (1.5,0.5) {\shortstack{G\\+50}};
		}
		{$p_{\mathrm{slip}}=0.3$}
		
		\vspace{0.4cm}
		
		\gwgrid{GW3}
		{
			\fill[red!30] (2,2) rectangle (3,3); \node at (2.5,2.5) {-5};
		}
		{
			\fill[green!30] (1,0) rectangle (2,1); \node at (1.5,0.5) {\shortstack{G\\+50}};
		}
		{$p_{\mathrm{slip}}=0.1$}
		\hspace{0.7cm}
		\gwgrid{GW4}
		{
			\fill[red!30] (1,1) rectangle (2,2); \node at (1.5,1.5) {-30};
		}
		{
			\fill[green!30] (4,0) rectangle (5,1); \node at (4.5,0.5) {\shortstack{G\\+50}};
		}
		{$p_{\mathrm{slip}}=0.1$}
		
		\caption{GridWorld reward layouts for GW1--GW4. Ordinary cells have reward $0$; $S$ is the start cell and $G$ the goal cell. Small gray labels show row-major cell indices.}
		\label{fig:gw_layouts}
	\end{figure*}
	
	\textbf{(1) Certificate-backed planning and an explicit ``how close'' signal.}
	For all moderate risk levels and the risk-neutral setting ($\alpha\ge0.10$, including $\alpha=1$), every matched GW instance has an ERQDP gap below the prespecified tolerance. This matters because ERQDP reports either an $\varepsilon$-certificate or a residual optimality-gap bound, whereas the \cite{AvilaPires2025} results report empirical performance without a solver-side certificate.
	
	\textbf{(2) Deployment-style speed for risk sweeps.}
	For $\alpha\ge0.10$, the reported training-time/solve-time speedups range from $78\times$ to $928\times$, with ERQDP solve times dropping to single-digit seconds for $\alpha\ge0.25$. This is useful for risk sweeps because ERQDP directly re-solves each setting without a training loop.
	
	\textbf{(3) Tail-return behavior.}
	For $\alpha\ge0.10$, ERQDP has positive median $\Delta\mathrm{CVaR}_\alpha$ for every $\alpha\in{0.10,0.25,0.50,1.00}$, while also improving or largely preserving mean return. Thus, in the moderate-to-risk-neutral regime, the certificate-backed solutions are not obtained by simply sacrificing average performance; it often yields \emph{Pareto-improvements}
	(higher tail-return \emph{and} higher mean) at moderate-to-risk-neutral settings, where tail estimates are statistically stable.
	
	\paragraph{Extreme tails ($\alpha\le 0.05$): tradeoff between guarantees and computation.}
	At the most extreme tails ($\alpha=0.01,0.05$), ERQDP is less frequently certified (under prespecified  $varepsilon$ )and the baseline can show better estimated tail-return.
	This regime is intrinsically hard: the objective concentrates on a vanishing fraction of trajectories, amplifying discretization demands
	(and, for any empirical method, requiring large sample sizes to stabilize tail estimates).
	Importantly, ERQDP still provides a principled \emph{compute--guarantee dial}: increasing internal resolution tightens the envelope,
	and the remaining gap quantifies exactly how far we are from a certificate when we stop at a compute budget.

	\paragraph{Beyond CVaR: distortion-risk sweep with WOWA ($\beta\ge 1$).}
	While the \cite{AvilaPires2025} GridWorld results are reported for CVaR, ERQDP supports a broader family of distortions under the same certified DP framework.
	We include WOWA with $\beta\ge 1$ (risk-neutral at $\beta=1$; more tail-sensitive as $\beta$ increases),
	enabling risk-aversion sweeps without changing the algorithmic pipeline or losing the certificate mechanism.
	
	\paragraph{Risk-seeking comparison.}
	\cite{AvilaPires2025} proposes OCVaR as a risk-seeking variant.
	On the four matched GridWorlds (GW1--GW4), ERQDP-WOWA (e.g., $\beta{=}0.9$) matches OCVaR's optimistic upper-tail score at $\tau{=}0.1$
	(median $\Delta \mathrm{OCVaR}_{0.10}\approx 0$) while improving mean return (median $+3.03$) and the lower tail
	(median $+21.07$ in $\mathrm{CVaR}_{0.10}$), with fewer penalty visits, and runs $28$--$41\times$ faster (median $33.85\times$).In these matched instances, OCVaR preserves the optimistic-tail score but can reduce lower-tail return, whereas ERQDP-WOWA with $\beta<1$ attains comparable optimistic-tail values while maintaining stronger lower-tail metrics and solver-side optimality gaps/certificates.
 	
	\paragraph{Matched comparison.}
	Table~\ref{tab:gridworld_cvar_compare} aggregates only entries with the same GridWorld instance and risk parameter in both methods.
	
	\paragraph{Metrics and sign convention.}
	All comparisons are in return units, so higher is better. We use lower-tail $\mathrm{CVaR}_\alpha$ and report median differences
	$\Delta \mathrm{CVaR}_\alpha := \mathrm{CVaR}_\alpha^{\textsc{erqdp}}-\mathrm{CVaR}_\alpha^{\textsc{dm}}$
	and $\Delta \mathbb{E}[G] := \mathbb{E}[G]^{\textsc{erqdp}}-\mathbb{E}[G]^{\textsc{dm}}$.
	
	\paragraph{Speedup.}
	ERQDP time is end-to-end planning wall-clock time. Since the \citet{AvilaPires2025} baseline is training-based, we report speedup as
	\[
	\text{Speedup}=\frac{\text{reported baseline training time}}{\text{ERQDP solve time}},
	\]
	and take the median over matched instances for each $\alpha$. Thus the timing comparison is a training-time/solve-time diagnostic, not an identical-runtime-mode comparison.
	
	\paragraph{Optimality-gap interpretation.}
	ERQDP reports a final upper--lower optimality-gap bound for the returned policy. Table~\ref{tab:gridworld_cvar_compare} summarizes this quantity by its median over matched instances for each $\alpha$. At extreme tails ($\alpha\le0.05$), tighter bounds require higher rank resolution; otherwise ERQDP reports the remaining residual bound, whereas the baseline reports empirical performance without a solver-side optimality certificate.
	
	\paragraph{Moderate-to-risk-neutral regime.}
	For $\alpha\ge0.10$, Table~\ref{tab:gridworld_cvar_compare} shows small median optimality gaps, large training-time/solve-time speedups, and positive median $\Delta\mathrm{CVaR}_\alpha$, with typically positive $\Delta\mathbb{E}[G]$.

	\subsection{Risk-Seeking Objectives: OCVaR vs. ERQDP-WOWA}
	\label{app:ocvar_vs_wowa}
	
	\paragraph{Setup.}
	We compare OCVaR and ERQDP-WOWA on the four matched GridWorld instances GW1--GW4. Table~\ref{tab:ocvar_vs_wowa_summary} summarizes medians, and Table~\ref{tab:ocvar_vs_wowa_perenv} gives the per-instance rows.
	
	\paragraph{What is ``risk-seeking'' here (empirical, from the reported metrics).}
	Standard CVaR targets the \emph{lower} tail and is inherently risk-averse.
	\cite{AvilaPires2025}'s OCVaR is designed to act as an \emph{optimistic} (upper-tail) functional.
	This is directly visible in the reported metrics: on GW1/2/8, the reported $\mathrm{OCVaR}_{0.10}$ saturates at the environment's maximum return,
	while the lower-tail $\mathrm{CVaR}_{0.10}$ can become strongly negative under risk-seeking tuning.
	In contrast, ERQDP achieves risk-seeking behavior through WOWA with $\beta<1$ (optimistic distortion), while still producing an
	$\varepsilon$-certificate (or a final gap).

	\begin{table*}[t]
		\centering
		\setlength{\tabcolsep}{3pt}
		\caption{Risk-seeking summary over the four matched GridWorld environments (GW1--GW4).
			Values are medians over GW1--GW4. Higher return is better; lower penalty-visits is better. Speedup compares \cite{AvilaPires2025} reported training time with ERQDP solve time.}
		\label{tab:ocvar_vs_wowa_summary}
		\resizebox{\columnwidth}{!}{%
			\begin{tabular}{l r r r r r r r}
				\toprule
				Method & Param & Median $\mathbb{E}[G]$ $\uparrow$ & Median $\mathrm{OCVaR}_{0.10}$ $\uparrow$ & Median $\mathrm{CVaR}_{0.10}$ $\uparrow$ &
				Median Penalty $\downarrow$ & Time(s) $\downarrow$ \\
				\midrule
				OCVaR & $\tau=0.1$ & 23.13 & 32.81 & -4.27 & 0.238 & 1751 (train)   \\
				OCVaR & $\tau=0.9$ & 23.90 & 32.59 & 1.86  & 0.224 & 1751 (train)  \\
				\addlinespace
				ERQDP-WOWA & $\beta=0.9$ & 26.82 & 32.80 & 16.80 & 0.030 & 51.8 (solve)  \\
				ERQDP-WOWA & $\beta=0.3$ & 26.82 & 32.80 & 16.80 & 0.030 & 805 (solve)    \\
				\bottomrule
			\end{tabular}%
		}
	\end{table*}
	
	\begin{table*}[t]
		\centering
		\setlength{\tabcolsep}{3pt}
		\caption{Per-environment risk-seeking comparison: \cite{AvilaPires2025} OCVaR ($\tau=0.1$, seed-mean) vs.\ ERQDP-WOWA ($\beta=0.9$).
			Speedup is reported training time divided by ERQDP solve time; Gap is the ERQDP optimality-gap bound.}
		\label{tab:ocvar_vs_wowa_perenv}
		\begin{tabular}{l r r r r r r r r}
			\toprule
			Env & $\mathbb{E}[G]_{\textsc{dm}}$ & $\mathrm{CVaR}_{0.10,\textsc{dm}}$ & $\mathrm{OCVaR}_{0.10,\textsc{dm}}$ &
			$\mathbb{E}[G]_{\textsc{er}}$ & $\mathrm{CVaR}_{0.10,\textsc{er}}$ & $\mathrm{OCVaR}_{0.10,\textsc{er}}$ &
			Speedup($\times$) & Gap \\
			\midrule
			GW1 & 26.51 & -9.25  & 32.81 & 30.11 & 17.21 & 32.80 & 32.62 & 4.85e-3 \\
			GW2 & 17.90 & -24.17 & 32.81 & 23.53 & 4.59  & 32.80 & 27.95 & 4.41e-3 \\
			GW3 & 30.37 & 23.41  & 32.81 & 31.48 & 25.23 & 32.80 & 40.92 & 4.89e-3 \\
			GW4 & 19.75 & 0.71   & 23.91 & 22.20 & 16.39 & 23.91 & 35.07 & 4.18e-3 \\
			\bottomrule
		\end{tabular}
	\end{table*}
	
	\paragraph{Interpretation (what the results actually show).}
	On these four shared environments, OCVaR can indeed behave ``risk-seeking'' in the sense of maximizing an upper-tail score
	($\mathrm{OCVaR}_{0.10}$), but this comes with a measurable downside: the lower-tail return ($\mathrm{CVaR}_{0.10}$) degrades sharply
	(median becomes negative at $\tau=0.1$) and penalty exposure increases.
	ERQDP-WOWA ($\beta<1$) matches the same upper-tail saturation where it is achievable, while simultaneously maintaining
	a much stronger lower tail (median $\mathrm{CVaR}_{0.10}\approx 16.8$) and fewer penalty visits.

	\paragraph{Certified optimality and speed.}
	A key advantage of ERQDP is that it reports either an explicit $\varepsilon$-certificate for the returned policy or, when the strict prespecified tolerance is not reached, the final residual gap.
	For both $\beta=0.9$ and $\beta=0.3$, ERQDP certifies all four shared environments.
	The speedups compare \cite{AvilaPires2025}'s reported training runtime with ERQDP solve time on the same tasks; they should therefore be read as training-time/solve-time diagnostics, not as a same-mode planner-runtime comparison.
	(see Table~\ref{tab:ocvar_vs_wowa_perenv}) It can be seen while more extreme optimism ($\beta=0.3$) increases runtime due to tighter certification
	(needing larger internal resolution), yet still preserves the certificate mechanism.

	\section{Extended BG Results and ERQDP Certificates/Gaps}
	\label{app:bg_extended}
	Table~\ref{tab:bg_wowa_sweep_full} reports the WOWA sweep, and Table~\ref{tab:bg_erqdp_extended} gives the extended ERQDP certificate/gaps and rollout summaries.
	\begin{table*}[t]
		\centering
		\setlength{\tabcolsep}{3pt}
		\caption{BG: ERQDP-WOWA sweep (cost; lower is better). We report both the optimized WOWA objective value (not comparable to CVaR)
			and common cost-risk metrics ($\mathrm{VaR}$/$\mathrm{CVaR}$) for cross-method risk-handling comparison.}
		\label{tab:bg_wowa_sweep_full}
		
		\begin{tabular}{c r r r r r r l r r}
			\toprule
			$\beta$ & Mean & $\mathrm{CVaR}_{0.2}$ & $\mathrm{VaR}_{0.2}$ & $\mathrm{CVaR}_{0.02}$ & $\mathrm{VaR}_{0.02}$ & WOWA obj. & Gap & $K$ & Time(s) \\
			\midrule
			0.1 & 61.86 & 99.97 & 97 & 100 & 100 & 11.41 &   7.3e-02 & 17000 & 1621.1 \\
			0.5 & 61.73 & 99.71 & 94 & 100 & 100 & 42.13 &   9.4e-03 & 17000 & 1586.6 \\
			1 & 61.65 & 97.91 & 87 & 100 & 100 & 61.92 &    0.0e+00 & 256 & 82.4 \\
			2 & 62.03 & 96.59 & 88 & 100 & 100 & 78.66 &    0.0e+00 & 1024 & 141.7 \\
			3 & 95 & 95 & 95 & 95 & 95 & 95 &   4.4e-05 & 4096 & 428.6 \\
			5 & 95 & 95 & 95 & 95 & 95 & 95 &   1.5e-04 & 4096 & 364.7 \\
			\bottomrule
		\end{tabular}%
		
	\end{table*}

	The extended table includes 95\% confidence intervals over 20,000 rollouts, tail metrics, the final certification/optimality gap, and the rank-grid size $K$.
	
	\begin{table*}[t]
		\centering
		\setlength{\tabcolsep}{3pt}
		\caption{BG extended results (ERQDP). Mean is reported with 95\% CI in brackets. VaR/CVaR are computed on the cost distribution
			(i.e., the upper tail of cost, equivalent to the lower tail of return).}
		\label{tab:bg_erqdp_extended}
		\begin{tabular}{l l l r r r r l r r}
			\toprule
			Risk & Param & Mean [CI] & VaR$_{0.02}$ & CVaR$_{0.02}$ & VaR$_{0.2}$ & CVaR$_{0.2}$ & Gap & $K$ & Time(s) \\
			\midrule
			CVaR & $\alpha=0.02$ & 95.00 [95.00,95.00] & 95.00 & 95.00 & 95.00 & 95.00    & 2.8e-14 & 17000 & 1596.36 \\
			CVaR & $\alpha=0.2$  & 95.00 [95.00,95.00] & 95.00 & 95.00 & 95.00 & 95.00  & 1.4e-14 & 17000 & 1447.72 \\
			CVaR & $\alpha=1$    & 61.65 [61.24,62.07] & 100.00 & 100.00 & 87.00 & 97.91   & 0.0e+00 & 256 & 60.47 \\
			\addlinespace
			WOWA & $\beta=0.1$ & 61.86 [61.43,62.30] & 100.00 & 100.00 & 97.00 & 99.98 &     7.3e-02 & 17000 & 1621.07 \\
			WOWA & $\beta=0.5$ & 61.73 [61.30,62.16] & 100.00 & 100.00 & 94.00 & 99.71   & 9.4e-03 & 17000 & 1586.61 \\
			WOWA & $\beta=1$   & 61.65 [61.24,62.07] & 100.00 & 100.00 & 87.00 & 97.91   & 0.0e+00 & 256 & 82.41 \\
			WOWA & $\beta=2$   & 62.03 [61.62,62.44] & 100.00 & 100.00 & 88.00 & 96.59   & 0.0e+00 & 1024 & 141.69 \\
			WOWA & $\beta=3$   & 95.00 [95.00,95.00] & 95.00 & 95.00 & 95.00 & 95.00  & 4.4e-05 & 4096 & 428.59 \\
			WOWA & $\beta=5$   & 95.00 [95.00,95.00] & 95.00 & 95.00 & 95.00 & 95.00   & 1.5e-04 & 4096 & 364.66 \\
			\bottomrule
		\end{tabular}
	\end{table*}
	
	\paragraph{Takeaway.}
	Lower $\alpha$ (CVaR) or higher $\beta$ (WOWA) pushes toward policies that reduce catastrophic outcomes, which may increase average cost.
	Conversely, $\beta<1$ is risk-seeking: it emphasizes best-case outcomes, yielding optimistic distorted values even when average cost stays similar.

	\section{Inventory Control (IC): Detailed Comparison to Rigter et al.\ (ICAPS'22)}
	\label{app:ic_rigter}
	
	\paragraph{Setup and metric.}
	Following Rigter et al., IC is evaluated over 10 stages with $N{=}20$ and the episode cost defined as
	$C = 400 - \sum_{t=0}^{H-1}\mathrm{profit}_t$ (thus $C<400$ indicates net profit).
	We report CVaR$_{0.02}$ (mean cost of the worst $2\%$ runs) and mean cost.
	
	\paragraph{Results.}
	Table~\ref{tab:ic_rigter} shows that ERQDP improves the CVaR$_{0.02}$ objective substantially, corresponding
	to roughly doubling the implied worst-tail profit (since tail-profit $=400-\text{CVaR}_{0.02}$).
	
	\begin{table*}[t]
		\centering
		\caption{Inventory Control (IC). Lower cost is better. Tail-profit is $400-\text{CVaR}_{0.02}$.}
		\label{tab:ic_rigter}
		\begin{tabular}{lcccc}
			\toprule
			Method & CVaR$_{0.02}$ cost $\downarrow$ & Mean cost $\downarrow$ & Tail-profit $\uparrow$ & Gap bound? \\
			\midrule
			Rigter EV                      & 416.42 & 235.62 & $-16.42$ & -- \\
			Rigter CVaR-WC ($\alpha{=}0.02$) & 386.49 & 286.18 & $13.51$  & -- \\
			Rigter CVaR-EV ($\alpha{=}0.02$) & 386.92 & 250.38 & $13.08$  & -- \\
			\midrule
			\textbf{ERQDP-CVaR ($\alpha{=}0.02$)} & \textbf{373.26} & 250.29 & \textbf{26.74} & \checkmark \\
			\bottomrule
		\end{tabular}
	\end{table*}
	
	\paragraph{A~posteriori optimality-gap bound (certificate).}
	Beyond the empirical scores, ERQDP logs explicit bounds for the optimized risk objective:
	an upper bound $UB_{\mathrm{opt}}$, a certified lower bound $LB_{\mathrm{opt}}^{\mathrm{cert}}$,
	and the achieved policy value $V^\pi(s_0)$, inducing a computable policy-gap bound
	\[
	\mathrm{policy\_gap} = UB_{\mathrm{opt}} - V^\pi(s_0).
	\]
	For IC at $\alpha{=}0.02$ (with $K_{\max}{=}512$), ERQDP reports
	$UB_{\mathrm{opt}}=-287.747$, $LB_{\mathrm{opt}}^{\mathrm{cert}}=-458.747$, $V^\pi(s_0)=-373.264$,
	yielding \emph{policy gap = 85.52} ; i.e., an explicit bound on suboptimality for the returned policy.
	Rigter et al.\ do not provide an analogous bound in their evaluation protocol.

	\section{Discretization Sensitivity}
	\label{app:param_sensitivity}
	
	This section reports the sensitivity checks summarized in Table~\ref{tab:main_param_sensitivity}. The experiments use the same ERQDP pipeline as the main results and vary only discretization budgets. Table~\ref{tab:k_sensitivity_appendix} varies the rank budget $K_{\max}$, while Table~\ref{tab:c_sensitivity_appendix} varies the PMF grid scale $c$. BG denotes Betting Game, GW denotes GridWorld, and IC denotes Inventory Control; a star denotes certification at the target tolerance. The qualitative conclusion is that $K$ controls the certificate gap, while $c$ is stable once the return grid is fine enough.
	
	\begin{table}[!htbp]
		\centering
		\caption{Representative sensitivity to the rank resolution $K$. Larger $K$ tightens the rank approximation and typically decreases the certificate gap, at the cost of additional runtime. Gap/span is reported as a percentage of the return span; $\star$ marks rows whose gap is below the target tolerance (prespecified $\varepsilon$).}
		
		\label{tab:k_sensitivity_appendix}
		\begin{tabular}{lllrrrr}
			\toprule
			Environment & Risk & $K_{\max}$ & final $K$ & gap/span & time (s) \\
			\midrule
			Betting Game & CVaR $\alpha=0.2$ & 64 & 64 & 1.250\% & 9.3 \\
			Betting Game & CVaR $\alpha=0.2$ & 128 & 128 & 0.938\% & 18.2 \\
			Betting Game & CVaR $\alpha=0.2$ & 256 & 256 & 0.312\% & 28.0 \\
			Betting Game & CVaR $\alpha=0.2$ & 512 & 512 & 0.234\% & 48.5 \\
			Betting Game & CVaR $\alpha=0.2$ & 1024 & 1024 & 0.078\%$^{\star}$ & 131.0 \\
			Betting Game & WOWA $\beta=2$ & 64 & 64 & 0.015\%$^{\star}$ & 19.3 \\
			Betting Game & WOWA $\beta=2$ & 128 & 128 & 0.006\%$^{\star}$ & 26.6 \\
			Betting Game & WOWA $\beta=2$ & 256 & 128 & 0.006\%$^{\star}$ & 35.3 \\
			Betting Game & WOWA $\beta=2$ & 512 & 128 & 0.006\%$^{\star}$ & 32.0 \\
			Betting Game & WOWA $\beta=2$ & 1024 & 128 & 0.006\%$^{\star}$ & 28.7 \\
			GridWorld (median over GW1-4) & CVaR $\alpha=0.1$ & 64 & 64 & 3.736\% & 2.8 \\
			GridWorld (median over GW1-4) & CVaR $\alpha=0.1$ & 128 & 128 & 1.223\% & 4.7 \\
			GridWorld (median over GW1-4) & CVaR $\alpha=0.1$ & 256 & 256 & 0.741\% & 7.0 \\
			GridWorld (median over GW1-4) & CVaR $\alpha=0.1$ & 512 & 512 & 0.152\% & 12.0 \\
			GridWorld (median over GW1-4) & CVaR $\alpha=0.1$ & 1024 & 512--1024 & 0.112\% & 18.9 \\
			Inventory Control & CVaR $\alpha=0.02$ & 64 & 64 & 15.758\% & 839.6 \\
			Inventory Control & CVaR $\alpha=0.02$ & 128 & 128 & 9.628\% & 1240.0 \\
			Inventory Control & CVaR $\alpha=0.02$ & 256 & 256 & 2.063\% & 1960.0 \\
			Inventory Control & CVaR $\alpha=0.02$ & 512 & 512 & 1.782\% & 3260.0 \\
			\bottomrule
		\end{tabular}
	\end{table}
	
	\begin{table}[!htbp]
		\centering
		\caption{Representative sensitivity to the PMF return-grid scale $c$ at fixed rank budget. In these runs, changing $c$ over $\{25,50,100,200\}$ has little effect on the certificate gap, indicating that the rank approximation, rather than return rounding, is the dominant limiting factor.}
		\label{tab:c_sensitivity_appendix}
		\begin{tabular}{lllrrrr}
			\toprule
			Environment. & Risk & $c$ & final $K$ & gap/span & time (s) \\
			\midrule
			Betting Game & CVaR $\alpha=0.2$ & 25 & 512 & 0.234\% & 48.9 \\
			Betting Game & CVaR $\alpha=0.2$ & 50 & 512 & 0.234\% & 46.2 \\
			Betting Game & CVaR $\alpha=0.2$ & 100 & 512 & 0.234\% & 48.7 \\
			Betting Game & CVaR $\alpha=0.2$ & 200 & 512 & 0.234\% & 51.4 \\
			Betting Game & WOWA $\beta=2$ & 25 & 128 & 0.006\%$^{\star}$ & 27.0 \\
			Betting Game & WOWA $\beta=2$ & 50 & 128 & 0.006\%$^{\star}$ & 25.7 \\
			Betting Game & WOWA $\beta=2$ & 100 & 128 & 0.006\%$^{\star}$ & 31.4 \\
			Betting Game & WOWA $\beta=2$ & 200 & 128 & 0.006\%$^{\star}$ & 29.7 \\
			GridWorld (median over 4) & CVaR $\alpha=0.1$ & 25 & 512 & 0.152\% & 10.7 \\
			GridWorld (median over 4) & CVaR $\alpha=0.1$ & 50 & 512 & 0.152\% & 9.0 \\
			GridWorld (median over 4) & CVaR $\alpha=0.1$ & 100 & 512 & 0.152\% & 10.2 \\
			GridWorld (median over 4) & CVaR $\alpha=0.1$ & 200 & 512 & 0.152\% & 11.0 \\
			Inventory Control & CVaR $\alpha=0.02$ & 25 & 256 & 2.063\% & 1830.0 \\
			Inventory Control & CVaR $\alpha=0.02$ & 50 & 256 & 2.063\% & 1790.0 \\
			Inventory Control & CVaR $\alpha=0.02$ & 100 & 256 & 2.063\% & 1940.0 \\
			Inventory Control & CVaR $\alpha=0.02$ & 200 & 256 & 2.063\% & 2030.0 \\
			\bottomrule
		\end{tabular}
	\end{table}

	\FloatBarrier
	\section{Proofs}
	\label{app:proofs}
	
	\subsection{Proof of Lemma~\ref{lem:quota}}
	\begin{proof}
		Let the sorted outcomes be $x_1\ge\cdots\ge x_N$ with masses $m_i$ and cumulative masses $C_i=\sum_{j=1}^i m_j$. The best-to-worst quantile function is constant and equal to $x_i$ on the interval $(C_{i-1},C_i]$. For a grid slice $(p_{k-1},p_k]$, the slice-average quantile is therefore
		\[
		\bar Q_X(k)=\frac{1}{q_k}\sum_{i=1}^N x_i\,\lambda\big((C_{i-1},C_i]\cap(p_{k-1},p_k]\big).
		\]
		The quota-filling procedure scans the same outcome intervals in decreasing value order and allocates exactly the intersection length above to slice $k$. Its accumulator for slice $k$ is consequently the numerator of this display, and division by $q_k$ gives exactly $\bar Q_X(k)$. This holds for every slice, so the returned vector is the slice-average quantile vector.
	\end{proof}
	
	\subsection{Proof of Proposition~\ref{prop:rqdp-opt}}
	\begin{proof}
		We prove the claim by induction on the remaining horizon $h$. For $h=0$, all policies induce the zero return and the algorithm stores $V_0(s)=\mathbf 0$, which is optimal. Suppose the claim holds for $h-1$. For a fixed state $s$ and action $a$, the successor summaries $V_{h-1}(s')$ are, by the induction hypothesis, the optimal surrogate summaries for the continuation problems. The backup forms the discrete mixture over discounted one-step rewards plus successor slice values. By Lemma~\ref{lem:quota}, the sort-and-quota-fill operation computes the exact $K$-slice summary of this induced surrogate lottery. Since the surrogate objective is linear in the slice summary, namely $\langle\Delta\phi^{(K)},\bar Q\rangle$, selecting an action that maximizes this score gives an optimal $h$-step surrogate action at $s$. Applying this argument for all states and all horizons proves that Algorithm~\ref{alg:rqdp} returns a deterministic Markov policy maximizing the induced $K$-slice surrogate objective.
	\end{proof}
	
	\subsection{Proof of Proposition~\ref{prop:pmf-eval}}
	\begin{proof}
		For a fixed policy, the return distribution satisfies the usual law of total probability over the first transition. The recursion in Eq.~\eqref{eq:pmf-recursion} applies exactly this decomposition: for each successor $s'$, it shifts the already-computed PMF of the remaining return by the rounded one-step discounted reward, weights it by $P(s'\mid s,\pi_h(s))$, and sums over successors. Induction on $h$ proves that the recursion computes the PMF of the rounded return $\tilde G^\pi(s_0)$.
		
		If every discounted one-step reward is aligned with the grid, the shift is exact at every step, so $\tilde G^\pi(s_0)=G^\pi(s_0)$. Otherwise each rounded one-step reward differs from the true discounted reward by at most $1/(2c)$. Along any trajectory, summing over $H$ stages gives $|G^\pi(s_0)-\tilde G^\pi(s_0)|\le H/(2c)$ almost surely. If two random variables differ almost surely by at most $\eta$, then their quantile functions differ uniformly by at most $\eta$. A distortion risk functional is a monotone weighted integral of quantiles with total weight one, hence its value changes by at most $\eta$. Taking $\eta=H/(2c)$ gives the stated bound.
	\end{proof}
	
	\subsection{Proof of Theorem~\ref{thm:lipschitz}}
	\begin{proof}
		Let $Q_X$ denote the best-to-worst quantile function of $X$. The distortion value can be written as a Stieltjes integral $\rho_\phi(X)=\int_0^1 Q_X(p)\,d\phi(p)$. Therefore
		\[
		\rho_\phi(X)-\rho_\psi(X)=\int_0^1 Q_X(p)\,d(\phi-\psi)(p).
		\]
		Because $X\in[m,M]$ almost surely, $Q_X$ is monotone with total variation at most $M-m$. Since $\phi(0)=\psi(0)$ and $\phi(1)=\psi(1)$ for distortions, integration by parts gives
		\[
		\begin{aligned}
			\left|\int_0^1 Q_X\,d(\phi-\psi)\right|
			&=\left|\int_0^1 (\phi-\psi)\,dQ_X\right|\\
			&\le \|\phi-\psi\|_\infty (M-m),
		\end{aligned}
		\]
		which proves the claim.
	\end{proof}

\end{document}